\newtheorem{prop}{Proposition}
\newtheorem{mydef}{Definition}
\newtheorem*{myplm}{Maximum Agreement Problem}
\newtheorem*{myplm2}{Decision Problem of MAP}
\newtheorem{thm}{Theorem}
\newtheorem{lemma}{Lemma}
\begin{document}

\title{$\tau$-FPL: Tolerance-Constrained Learning in Linear Time \thanks{A preliminary version of this work appeared in the Proceedings of the Thirty-Second AAAI Conference on Artificial Intelligence, 2018.}}

\author{\name Ao Zhang \email az.aozhang@gmail.com \\
       \addr School of Computer Science and Software Engineering\\
       East China Normal University, Shanghai, China
       \AND
       \name Nan Li \email nanli.ln@alibaba-inc.com \\
       \addr Institute of Data Science and Technologies\\
       Alibaba Group, Hangzhou, China
       \AND 
       \name Jian Pu \email jianpu@sei.ecnu.edu.cn\\
       \addr School of Computer Science and Software Engineering\\
       East China Normal University, Shanghai, China
       \AND 
       \name Jun Wang\email jwang@sei.ecnu.edu.cn\\
       \addr School of Computer Science and Software Engineering\\
       East China Normal University, Shanghai, China
       \AND 
	   \name Junchi Yan \email yanesta13@163.com\\
	   \addr IBM Research -- China, Shanghai, China\\
	   School of Computer Science and Software Engineering\\
	   East China Normal University, Shanghai, China
		\AND 
		\name  Hongyuan Zha \email zha@cc.gatech.edu\\
		\addr 
		College of Computing, Georgia Institute of Technology, Atlanta, USA\\
        School of Computer Science and Software Engineering\\
       East China Normal University, Shanghai, China
	}


\maketitle

\begin{abstract}
Learning a classifier with control on the false-positive rate plays a critical role in many machine learning applications. Existing approaches either introduce prior knowledge dependent label cost or tune parameters based on traditional classifiers, which lack consistency in methodology because they do not strictly adhere to the false-positive rate constraint. In this paper, we propose a novel scoring-thresholding approach, \emph{$\tau$-False Positive Learning} ($\tau$-FPL) to address this problem. We show the scoring problem which takes the false-positive rate tolerance into accounts can be efficiently solved in linear time, also an out-of-bootstrap thresholding method can transform the learned ranking function into a low false-positive classifier. Both theoretical analysis and experimental results show superior performance of the proposed $\tau$-FPL over existing approaches.
\end{abstract}

\begin{keywords}
  Neyman-Pearson Classification, False Positive Rate Control, Bipartite Ranking, Partial-AUC Optimization, Euclidean Projection
\end{keywords}

\section{Introduction}
In real-world applications, such as spam filtering \citep{drucker1999support} and medical diagnosing \citep{huang2010bayesian}, the loss of misclassifying a positive instance and negative instance can be rather different. For instance, in medical diagnosing, misdiagnosing a patient as healthy is more dangerous than misclassifying healthy person as sick. Meanwhile, in reality, it is often very difficult to define an accurate cost for these two kinds of errors \citep{liu2010learning,zhou2016large}. In such situations, it is more desirable to keep the classifier working under a small tolerance of false-positive rate (FPR) $\tau$, i.e., only allow the classifier to misclassify no larger than $\tau$ percent of negative instances. Traditional classifiers trained by maximizing classification accuracy or AUC are not suitable due to mismatched goal.

In the literature, classification under constrained false-positive rate is known as Neyman-Pearson (NP) Classification problem \citep{scott2005neyman,lehmann2006testing,rigollet2011neyman}, and existing approaches can be roughly grouped into several categories. One common approach is to use \emph{cost-sensitive learning}, which assigns different costs for different classes, and representatives include cost-sensitive SVM \citep{osuna1997support,davenport2006controlling,davenport2010tuning}, cost-interval SVM \citep{liu2010learning} and cost-sensitive boosting \citep{masnadi2007asymmetric,masnadi2011cost}. Though effective and efficient in handling different misclassification costs, it is usually difficult to find the appropriate misclassification cost for the specific FPR tolerance.
Another group of methods formulates this problem as a constrained optimization problem, which has the FPR tolerance as an explicit constraint \citep{mozer2002prodding,gasso2011batch,Mahdavi2013}. These methods often need to find the saddle point of the Lagrange function, leading to a time-consuming alternate optimization. Moreover, a surrogate loss is often used to simplify the optimization problem, possibly making the tolerance constraint not satisfied in practice.
The third line of research is scoring-thresholding methods, which train a scoring function first, then find a threshold to meet the target FPR tolerance \citep{drucker1999support}. In practice, the scoring function can be trained by either class conditional density estimation~\citep{tong2013plug} or bipartite ranking~\citep{narasimhan2013relationship}. However, computing density estimation itself is another difficult problem. Also most bipartite ranking methods are less scalable with  super-linear training complexity.
Additionally, there are methods paying special attention to the positive class. For example, asymmetric SVM \citep{wu2008asymmetric} maximizes the margin between negative samples and the core of positive samples, one-class SVM \citep{ben2001support}  finds the smallest ball to enclose positive samples. However, they do not incorporate the FPR tolerance into the learning procedure either.

In this paper, we address the tolerance constrained learning problem by proposing \emph{$\tau$-False Positive Learning} ($\tau$-FPL). Specifically, $\tau$-FPL is a  scoring-thresholding method. In the scoring stage, we explicitly learn a ranking function which optimizes the probability of ranking any positive instance above the centroid of the worst $\tau$ percent of negative instances. Whereafter, it is shown that, with the help of our newly proposed Euclidean projection algorithm, this ranking problem can be solved in linear time under the projected gradient framework. It is worth noting that the Euclidean projection problem is a generalization of a large family of projection problems, and our proposed linear-time algorithm based on bisection and divide-and-conquer is one to three orders faster than existing state-of-the-art methods.
In the thresholding stage, we devise an out-of-bootstrap thresholding method to transform aforementioned ranking function into a low false-positive classifier. This method is much less prone to overfitting compared to existing thresholding methods. Theoretical analysis and experimental results show that the proposed method achieves superior performance over existing approaches.

\section{From Constrained Optimization to Ranking}
In this section, we show that the FPR tolerance problem can be transformed into a ranking problem, and then formulate a convex ranking loss which is tighter than existing relaxation approaches.

Let $\mathcal{X} =\{\bm{x} \mid \bm{x} \in \mathbb{R}^d: ||\bm{x}|| \leq 1 \}$ be the instance space for some norm $||\cdot||$ and $\mathcal{Y} = \{-1, +1\}$ be the label set, $\mathcal{S} = \mathcal{S}_+ \cup \mathcal{S}_-$ be a set of training instances, where $\mathcal{S}_+ = \{ \bm{x}_i^+ \in \mathcal{X}\}_{i=1}^m$ and $\mathcal{S}_- = \{ \bm{x}_j^- \in \mathcal{X}\}_{j=1}^n$ contains $m$ and $n$ instances independently sampled from distributions $\mathbb{P}^+$ and $\mathbb{P}^-$, respectively. Let $0 \leq \tau \ll 1$ be the maximum tolerance of false-positive rate. 
Consider the following Neyman-Pearson classification problem, which aims at minimizing the false negative rate of classifier under the constraint of false positive rate:
\begin{equation} 
\begin{split}
\min_{f, b} ~ ~ \mathbb{P}_{\bm{x}^+ \sim \mathbb{P}^+}\left(f(\bm{x}^+\right) \leq b)~~~~\textrm{\small s.t.} ~ ~ \mathbb{P}_{\bm{x}^- \sim \mathbb{P}^-}\left(f(\bm{x}^-) > b\right) \leq \tau,
\end{split}
\end{equation} 
where $f:\mathcal{X} \to \mathbb{R}$ is a scoring function and $b \in \mathbb{R}$ is a threshold. With finite training instances, the corresponding \emph{empirical risk minimization} problem is
\begin{equation}\label{empirical plm}
\begin{split}
\min_{f, b} ~&~  {\cal L}_{emp}(f, b)  = \frac{1}{m} \sum \nolimits_{i=1}^m\mathbb{I}\left(f(\bm{x_i^+}) \leq b\right) \\\\
\textrm{\small s.t.} ~&~ \quad \frac{1}{n} \sum \nolimits_{j=1}^n\mathbb{I}\left(f(\bm{x_j^-}) > b\right) \leq \tau,
\end{split}
\end{equation} 
where $\mathbb{I}(u)$ is the indicator function.

The empirical version of the optimization problem is difficult to handle due to the presence of non-continuous constraint; we introduce an equivalent form below.
\begin{prop}\label{tcp-rp}
	Define $f(\bm{x}_{[j]}^-)$ be the $j$-th largest value in multiset $\{f(\bm{x}_i) \mid \bm{x}_i \in \mathcal{S}_-\}$ and $\lfloor.\rfloor$ be the floor function. The constrained optimization problem (\ref{empirical plm}) share the same optimal solution $f^*$ and optimal objective value with the following ranking problem
	\begin{equation}\label{origin rank plm}
	\min_{f}\frac{1}{m}\sum_{i=1}^m \mathbb{I}\left(f(\bm{x}_i^+) - f(\bm{x}_{[\lfloor \tau n \rfloor + 1]}^-) \leq 0\right).
	\end{equation}
\end{prop}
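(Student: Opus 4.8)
The plan is to read (\ref{empirical plm}) as a \emph{joint} minimization over the pair $(f,b)$ and to perform the inner minimization over $b$ in closed form for each fixed $f$; substituting the optimal threshold should collapse ${\cal L}_{emp}$ to exactly the summand of (\ref{origin rank plm}). First I would rewrite the feasibility constraint combinatorially. Since $\sum_{j=1}^n \mathbb{I}(f(\bm{x}_j^-) > b)$ is integer-valued, the constraint $\frac{1}{n}\sum_{j=1}^n \mathbb{I}(f(\bm{x}_j^-) > b) \leq \tau$ is equivalent to requiring that at most $\lfloor \tau n \rfloor$ negative scores strictly exceed $b$.

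Next I would translate this counting condition into a bound on $b$ itself. Writing $k = \lfloor \tau n \rfloor$ and using the sorted negative scores $f(\bm{x}_{[1]}^-) \geq \cdots \geq f(\bm{x}_{[n]}^-)$, the claim is that the feasible set for $b$ is precisely the ray $\{\, b : b \geq f(\bm{x}_{[k+1]}^-) \,\}$. For the forward direction, if $b < f(\bm{x}_{[k+1]}^-)$ then the top $k+1$ negative scores all exceed $b$, so the count is at least $k+1$ and feasibility fails; for the reverse direction, if $b \geq f(\bm{x}_{[k+1]}^-)$ then any score at position $\geq k+1$ is at most $f(\bm{x}_{[k+1]}^-) \leq b$, hence not strictly larger than $b$, so only the top $k$ positions can contribute and the count is at most $k$. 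Care with ties is required here: because the count uses a strict inequality while the threshold value $f(\bm{x}_{[k+1]}^-)$ may be repeated, one must verify that equality at position $k+1$ does not itself enter the count. This strict-versus-nonstrict bookkeeping is the delicate point of the argument.

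With the feasible ray identified (and nonempty, as it always contains all large $b$), I would minimize the objective over $b$. The false-negative count $\frac{1}{m}\sum_{i=1}^m \mathbb{I}(f(\bm{x}_i^+) \leq b)$ is non-decreasing in $b$, so over $[f(\bm{x}_{[k+1]}^-), \infty)$ its minimum is attained at the left endpoint $b^\star = f(\bm{x}_{[k+1]}^-)$. Substituting $b^\star$ yields the inner optimal value $\frac{1}{m}\sum_{i=1}^m \mathbb{I}(f(\bm{x}_i^+) \leq f(\bm{x}_{[k+1]}^-))$, which coincides with the summand of (\ref{origin rank plm}) after rewriting $f(\bm{x}_i^+) \leq f(\bm{x}_{[k+1]}^-)$ as $f(\bm{x}_i^+) - f(\bm{x}_{[k+1]}^-) \leq 0$.

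Finally, since for \emph{every} $f$ the optimal inner value equals the ranking objective, taking the outer minimum over $f$ shows that (\ref{empirical plm}) and (\ref{origin rank plm}) attain the same optimal value and are minimized by the same $f^\star$, with the optimal threshold recovered as $b^\star = f^\star(\bm{x}_{[k+1]}^-)$. I expect the main obstacle to be the tie-handling in the feasibility translation of the second step; once the feasible ray is correctly pinned down, the monotonicity argument and the substitution are routine.
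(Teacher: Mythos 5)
Your proposal is correct and follows essentially the same route as the paper's proof: for fixed $f$, identify the feasible set for $b$ as the ray $b \geq f(\bm{x}_{[\lfloor \tau n \rfloor + 1]}^-)$, then use monotonicity of the objective in $b$ to place the optimum at the left endpoint and substitute. Your extra care with the strict-versus-nonstrict inequality under ties is a valid elaboration of a step the paper simply asserts as clear.
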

\begin{proof}
	For a fixed $f$, it is clear that the constraint in (\ref{empirical plm}) is equivalent to $b \geq f(\bm{x}_{[\lfloor \tau n \rfloor + 1]}^-)$. Since the objective in (\ref{empirical plm}) is a non-decreasing function of $b$, its minimum is achieved at $b = f(\bm{x}_{[\lfloor \tau n \rfloor + 1]}^-)$. From this, we can transform the original problem (\ref{empirical plm}) into its equivalent form (\ref{origin rank plm}) by substitution.
\end{proof}
Proposition \ref{tcp-rp} reveals the connection between constrained optimization \eqref{empirical plm} and ranking problem \eqref{origin rank plm}. Intuitively, problem \eqref{origin rank plm} makes comparsion between each positve sample and the $(\lfloor \tau n \rfloor + 1)$-th largest negative sample.  Here we give further explanation about its form: it is equivalent to the maximization of the partial-AUC near the risk area.
\begin{figure}[ht]
	\centering
	\includegraphics[width=0.4\linewidth]{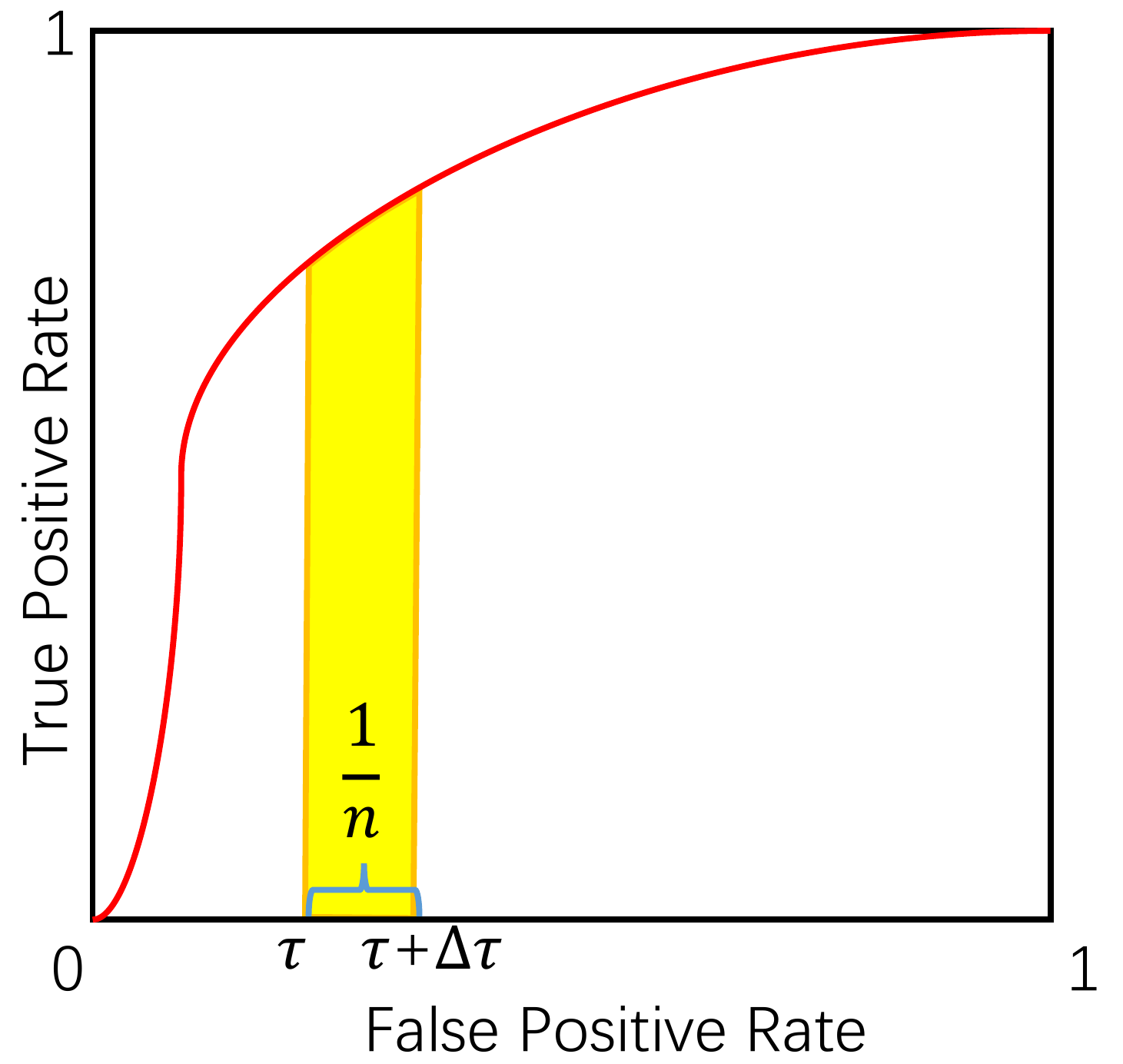}
	\caption{Neyman-Pearson Classification is equivalent to a partial-AUC optimization near the specified FPR.}
	\label{Fig.0}
\end{figure}

Although it seems that partial-AUC optimization considers fewer samples than the full AUC, optimizing (\ref{origin rank plm}) is intractable even if we replace $\mathbb{I}(\cdot)$ by a convex loss, since the operation $[\cdot]$ is non-convex when $\lfloor\tau n\rfloor > 0$. Indeed, Theorem \ref{np-hard} further shows that whatever $\tau$ is chosen, even for some weak hypothesis of $f$, the corresponding optimization problem is NP-hard.
\begin{mydef}\label{def:surr}
	A \emph{surrogate function} of ~$\mathbb{I}(u \leq 0)$ is a contionuous and non-increasing function $L:\mathbb{R} \to \mathbb{R}$ satisfies that (i) $\forall u \leq 0$, $L(u) \geq L(0) \geq 1$; (ii) $\forall u > 0$, $0 \leq L(u) < L(0)$; (iii) $L(u) \rightarrow 0$ as $u \rightarrow +\infty$.
\end{mydef}

\begin{thm}\label{np-hard}
	For fixed $\lambda > 0$, $0 < \tau < 1$ and surrogate function $L(\cdot)$, optimization problem $\tau$-$OPT_{L}^{\lambda}$
	\begin{equation*}
	\min_{f \in \mathcal{H}^d}\frac{1}{m}\sum_{i=1}^m L\left(f(\bm{x}_i^+) - f(\bm{x}_{[\lfloor \tau n \rfloor + 1]}^-)\right)
	\end{equation*} 
	with hyperplane hypothesis set $\mathcal{H}^d = \{f(\bm{x}) = \bm{w}^{\top}\bm{x} \mid \bm{w} \in \mathbb{R}^d, ||\bm{w}|| \leq \lambda\}$ is NP-hard.
\end{thm}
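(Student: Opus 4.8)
The plan is to prove NP-hardness by exhibiting a polynomial-time reduction from the \emph{Maximum Agreement Problem} (MAP) for homogeneous halfspaces, whose decision version is NP-hard: given labeled points $\{(\bm{z}_k, y_k)\}_{k=1}^N \subset \mathbb{R}^d \times \{-1,+1\}$ and an integer $K$, decide whether some $\bm{w}$ with $\|\bm{w}\| \le \lambda$ satisfies $y_k \bm{w}^\top \bm{z}_k > 0$ for at least $K$ indices $k$. Absorbing each label into its point ($\bm{z}_k \leftarrow y_k \bm{z}_k$), agreement becomes the homogeneous condition $\bm{w}^\top \bm{z}_k > 0$, so the task is to place a maximum number of the $\bm{z}_k$ in the open halfspace determined by $\bm{w}$. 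The objective of the reduction is to build, for the given fixed $\tau$, sets $\mathcal{S}_+$, $\mathcal{S}_-$ and counts $m$, $n$ such that the optimum of $\tau$-$OPT_L^\lambda$ is a known monotone function of the maximum achievable agreement, so that comparing the optimum against a computable threshold decides MAP.

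First I would analyze where the hardness can possibly live. If the order-statistic index satisfies $\lfloor \tau n \rfloor = 0$, the comparison term is $\max_j f(\bm{x}_j^-)$, a convex function of $\bm{w}$; for a convex surrogate $L$ the whole objective is then convex and the problem is easy. Hence the reduction must use an index $\lfloor \tau n \rfloor \ge 1$, and the genuine source of intractability is the non-convex selection operator $[\,\cdot\,]$ that picks the $(\lfloor \tau n \rfloor+1)$-th largest negative score. Accordingly I would grow $n$ with the MAP size so that $\lfloor \tau n \rfloor$ scales, and I would encode the MAP points $\bm{z}_k$ into the instances so that the identity of the selected threshold negative --- equivalently, the combinatorial choice of which negatives sit above the threshold --- is exactly the choice of which $\bm{z}_k$ lie in the halfspace $\bm{w}^\top \bm{z}_k > 0$.

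Next I would control the arbitrary surrogate $L$. Because $L$ is only assumed continuous and non-increasing with $L(u) \ge L(0) \ge 1$ for $u \le 0$ and $0 \le L(u) < L(0)$ for $u > 0$ (Definition \ref{def:surr}), I would construct the points (using scaled $\pm 1$ patterns together with a few auxiliary anchor coordinates) so that at every vertex of the feasible region the relevant margins $f(\bm{x}_i^+) - f(\bm{x}_{[\lfloor \tau n \rfloor + 1]}^-)$ lie in a fixed two-point set $\{-c,+c\}$ with $c>0$. On such configurations $\sum_i L(\cdot)$ becomes an affine, strictly decreasing function of the number of positive margins, namely $a_0 - (L(-c)-L(c))\cdot(\#\text{agreements})$, so minimizing the surrogate sum is \emph{equivalent} to maximizing agreement and the value of the optimum determines $K$. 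The anchors serve to fix the baseline against which the MAP-encoding instances are compared and to set $\lfloor \tau n \rfloor$ to the intended value.

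The main obstacle is the simultaneous handling of the non-convex operator $[\,\cdot\,]$ and the generic surrogate. Two points need care. Since a homogeneous $\bm{w}$ cannot keep any fixed negative strictly above the threshold for \emph{all} $\bm{w}$, I cannot freeze the comparison value globally; I can only argue that at an optimal $\bm{w}$ the selected threshold negative and the induced margins match a MAP agreement pattern, which requires showing that the optimum is attained at a vertex configuration where the two-point margin structure holds. Establishing this rigidity --- ruling out spurious minimizers where some margin falls strictly between $-c$ and $+c$ and the arbitrary shape of $L$ could otherwise create a better non-combinatorial solution --- is the crux, and it is what forces the careful $\pm 1$ geometry and the norm bound $\|\bm{w}\| \le \lambda$ to be chosen in concert with $\|\bm{x}\| \le 1$. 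Once this equivalence at the optimum is secured, the reduction is clearly polynomial in $N$ and $d$, and the decision threshold on the objective follows immediately, completing the proof.
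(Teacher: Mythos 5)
Your choice of source problem is the right one---the paper also reduces from the (decision version of the) Maximum Agreement Problem for halfspaces, absorbs the labels into the points, appends one coordinate to homogenize the bias, and pads the negative set so that the order-statistic index $\lfloor\tau n\rfloor+1$ lands exactly where the agreement count $K$ dictates. But the core mechanism of your reduction has a genuine gap, and you name it yourself: you need every margin at an optimal $\bm{w}$ to lie in a two-point set $\{-c,+c\}$ so that the objective becomes an affine function of the number of agreements, and you have no argument ruling out minimizers where margins take intermediate values. For an arbitrary surrogate $L$ (only assumed continuous, non-increasing, with $L(u)\geq L(0)\geq 1$ for $u\leq 0$ and $0\leq L(u)<L(0)$ for $u>0$) this rigidity claim is not a technicality; the whole reduction collapses without it, because nothing pins down the value of $L$ at a margin strictly between $-c$ and $c$, and a non-combinatorial $\bm{w}$ could beat every vertex configuration.

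The missing idea that makes the paper's proof work is to take a \emph{single} positive instance $\bm{x}^{+}=\bm{0}$, so that $f(\bm{x}^{+})=0$ and the objective degenerates to the single term $L\bigl(-\bm{w}^{\top}\bm{x}^{-}_{[\lfloor\tau n\rfloor+1]}\bigr)$. Then one never needs to control the magnitude of any margin: by properties (i)--(ii) of Definition \ref{def:surr}, this value is strictly below $L(0)$ if and only if the selected order-statistic negative has strictly negative score, i.e.\ if and only if at most $\lfloor\tau n\rfloor$ negatives are misranked, which after the padding is calibrated equals the statement that some halfspace achieves at least $K$ agreements. The decision threshold on the objective is simply the known constant $L(0)$, not a function of $L$ at engineered margin values. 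The paper also splits into two cases ($K$ above or below $N-\lfloor\tau N\rfloor$) with slightly different padding vectors to keep the required padding count $A=\mathcal{O}(N)$; your plan does not address the feasibility of hitting the target index with polynomially many padding points, though that part is routine once the single-positive construction is in place.
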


This intractability motivates us to consider the following upper bound approximation of (\ref{origin rank plm}):
\begin{equation}\label{new rank plm}
\min_{f}\frac{1}{m}\sum_{i=1}^m \mathbb{I}\left(f(\bm{x}_i^+) -  \frac{1}{\lfloor \tau n \rfloor + 1}\sum_{i = 1}^{\lfloor \tau n \rfloor + 1} f(\bm{x}_{[i]}^-) \leq 0\right)
\end{equation}
which prefers scores on positive examples to exceed the mean of scores on the worst $\tau$-proportion of negative examples. If $\tau = 0$, optimization (\ref{new rank plm}) is equivalent to the original problem (\ref{origin rank plm}). In general cases, equality also could hold when both the scores of the worst $\tau$-proportion negative examples are the same.

The advantages of considering ranking problem (\ref{new rank plm}) include (details given later): 
\begin{itemize}
	\item For appropriate hypothesis set of $f$, replacing $\mathbb{I}(\cdot)$ by its convex surrogate will produce a tight convex upper bound of the original minimization problem (\ref{origin rank plm}), in contrast to the cost-sensitive classification which may only offer an insecure lower bound. This upper bound is also tighter than the convex relaxation of the original constrained minimization problem \eqref{empirical plm} (see Proposition \ref{prop:upper});
	\item By designing efficient learning algorithm, we can find the global optimal solution of this ranking problem in \emph{linear time}, which makes it well suited for large-scale scenarios, and outperforms most of the traditional ranking algorithms; 
	\item Explicitly takes $\tau$ into consideration and no additional cost hyper-parameters required;
	\item The generalization performance of the optimal solution can be theoretically established.
\end{itemize}

\subsection{Comparison to Alternative Methods}
Before introducing our algorithm, we briefly review some related approaches to FPR constrained learning, and show that our approach can be seen as a \emph{tighter upper bound}, meanwhile maintain linear training complexity.

\textbf{Cost sensitive Learning}~One alternative approach to eliminating the constraint in \eqref{empirical plm} is to approximate it by introducing asymmetric costs for different types of error into classification learning framework:
\begin{equation*}
\min_{f, b} {\cal L}_{emp}^C(f, b) = \frac{1}{m} \sum_{i=1}^m\mathbb{I}\left(f(\bm{x_i^+}) \leq b\right) + \frac{C}{n} \sum_{j=1}^n\mathbb{I}\left(f(\bm{x_j^-}) > b\right),
\end{equation*}
where $C \geq 0$ is a hyper-parameter that punishes the gain of false positive instance. Although reasonable for handling different misclassification costs, we point out that methods under this framework indeed \emph{minimize a lower bound} of problem \eqref{empirical plm}. This can be verified by formulating \eqref{empirical plm} into unconstrained form ${\cal L}_{emp}'(f, b)$
\begin{eqnarray*}
	& &\!\!\!\!{\cal L}_{emp}'(f, b) \\
	&\triangleq&\!\!\!\! \max_{\lambda \geq 0} \frac{1}{m} \sum_{i=1}^m\mathbb{I}\left(f(\bm{x_i^+}) \leq b\right) \!+\! \lambda\left( \frac{1}{n} \sum_{j=1}^n\mathbb{I}\left(f(\bm{x_j^-}) > b\right) - \tau \right) \\
	&\geq&\!\!\!\! \frac{1}{m} \sum_{i=1}^m\mathbb{I}\left(f(\bm{x_i^+}) \leq b\right) + C\left( \frac{1}{n} \sum_{j=1}^n\mathbb{I}\left(f(\bm{x_j^-}) > b\right) - \tau\right)\\
	&=&\!\!\!\! {\cal L}_{emp}^C(f, b) - C\tau.
\end{eqnarray*}
Thus, for a fixed $C$, minimizing ${\cal L}_{emp}^C$ is equivalent to minimize a lower bound of ${\cal L}_{emp}$. In other words, cost-sensitive learning methods are \emph{insecure} in this setting: mismatched $C$ can easily violate the constraint on FPR or make excessive requirements
, and the optimal $C$ for specified $\tau$ is only known after solving the original constrained problem, which can be proved NP-hard. 

In practice, one also has to make a continuous approximation for the constraint in \eqref{empirical plm} for tractable cost-sensitive training, this multi-level approximation makes the relationship between the new problem and original unclear, and blocks any straightforward theoretical justification.

\textbf{Constrained Optimization}~~The main difficulty of employing Lagrangian based method to solve problem \eqref{empirical plm} lies in the fact that both the constraint and objective are non-continuous. In order to make the problem tractable while satisfying FPR constraint strictly, the standard solution approach relies on replacing $\mathbb{I}(\cdot)$ with its convex surrogate function (see Definition \ref{def:surr}) to obtain a convex constrained optimization problem \footnote{For precisely approximating the indicator function in constraint, choosing ``bounded'' loss such as sigmoid or ramp loss \citep{collobert2006trading,gasso2011batch} etc. seems appealing. However, bounded functions always bring about a non-convex property, and corresponding problems are usually NP-hard \citep{yu2012polynomial}. These difficulties limit both efficient training and effective theoretical guarantee of the learned model.}. Interestingly, Proposition \ref{prop:upper} shows that whichever surrogate function and hypothesis set are chosen, the resulting constrained optimization problem is a \emph{weaker upper bound} than that of our approach.
\begin{prop}\label{prop:upper}
	For any non-empty hypothesis set $\mathcal{H}$ and $f \in \mathcal{H}$, convex surrogate function $L(\cdot)$, let
	\begin{eqnarray*}
		\bar{R}_0 \!\!&=&\!\!\! \frac{1}{m}\sum \nolimits_{i=1}^m \mathbb{I}\left(f(\bm{x}_i^+) - f(\bm{x}_{[\lfloor \tau n \rfloor + 1]}^-) \leq 0\right)\\
		\bar{R}_1 \!\!&=&\!\!\! \frac{1}{m} \sum \nolimits_{i=1}^m L\left(f(\bm{x}_i^+) - \frac{1}{\lfloor \tau n \rfloor + 1}\sum_{j=1}^{\lfloor \tau n \rfloor + 1} f(\bm{x}_{[j]}^-)\right)\\
		\bar{R}_2 \!\!&=&\!\!\! \min_{b \in \mathbb{R}} \frac{1}{m} \sum_{i=1}^m L\left(f(\bm{x}_i^+) - b\right)~\textrm{s.t.}~\frac{1}{n} \sum_{j=1}^n L\left(b - f(\bm{x}_j^-)\right) \leq \tau,
	\end{eqnarray*} 
	we have 
	\begin{equation*}
	\bar{R}_0 \leq \bar{R}_1 \leq \bar{R}_2.
	\end{equation*}
\end{prop}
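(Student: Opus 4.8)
The plan is to prove the two inequalities separately, each reducing to a comparison of the arguments fed into the non-increasing loss $L$. Throughout I would write $k = \lfloor \tau n \rfloor + 1$ and let $\bar s = \frac{1}{k}\sum_{j=1}^{k} f(\bm{x}_{[j]}^-)$ denote the centroid of the worst $\tau$-proportion of negatives, so that $\bar R_1 = \frac1m \sum_i L\!\left(f(\bm{x}_i^+) - \bar s\right)$.

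For $\bar R_0 \le \bar R_1$ I would argue term by term. Since the negative scores are sorted in non-increasing order, every one of the top $k$ values is at least $f(\bm{x}_{[k]}^-)$, hence their mean satisfies $\bar s \ge f(\bm{x}_{[k]}^-)$, and therefore $f(\bm{x}_i^+) - \bar s \le f(\bm{x}_i^+) - f(\bm{x}_{[k]}^-)$. Fixing an index $i$: if the indicator in $\bar R_0$ equals $1$, i.e. $f(\bm{x}_i^+) - f(\bm{x}_{[k]}^-) \le 0$, then the displayed inequality forces $f(\bm{x}_i^+) - \bar s \le 0$, and property (i) of Definition \ref{def:surr} gives $L(f(\bm{x}_i^+) - \bar s) \ge L(0) \ge 1$; if the indicator equals $0$, non-negativity of $L$ already suffices. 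Either way the $i$-th surrogate term dominates the $i$-th indicator term, and summing over $i$ yields $\bar R_0 \le \bar R_1$.

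For $\bar R_1 \le \bar R_2$ the idea is to show that the centroid $\bar s$ never exceeds any threshold $b$ that is feasible for the constrained surrogate problem, after which monotonicity of $L$ finishes the argument. Let $b$ be any threshold with $\frac1n \sum_{j=1}^n L(b - f(\bm{x}_j^-)) \le \tau$. Because all terms are non-negative, restricting to the top $k$ negatives only decreases the sum, giving $\sum_{j=1}^{k} L(b - f(\bm{x}_{[j]}^-)) \le \tau n$. The crucial arithmetic fact is $\tau n < \lfloor \tau n \rfloor + 1 = k$, so dividing by $k$ shows the average of these $k$ loss values is strictly below $1$. Invoking convexity of $L$ through Jensen's inequality, $L(b - \bar s) = L\big(\tfrac1k \sum_{j=1}^k (b - f(\bm{x}_{[j]}^-))\big) \le \tfrac1k \sum_{j=1}^k L(b - f(\bm{x}_{[j]}^-)) < 1 \le L(0)$; since $L(u) \ge L(0)$ for all $u \le 0$ by property (i), this is only possible when $b - \bar s > 0$, i.e. $\bar s < b$.

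Having established $\bar s \le b$ for every feasible $b$, monotonicity of $L$ gives $f(\bm{x}_i^+) - \bar s \ge f(\bm{x}_i^+) - b$ and hence $L(f(\bm{x}_i^+) - \bar s) \le L(f(\bm{x}_i^+) - b)$ for each $i$; averaging over $i$ and taking the infimum over feasible $b$ turns this into $\bar R_1 \le \bar R_2$. I expect the middle inequality to be the main obstacle, and it is the step that genuinely uses the hypotheses: a naive counting argument (each negative scoring above $b$ contributes at least $1$ to the constraint) only controls \emph{how many} negatives lie above $b$, which is too weak to bound the mean $\bar s$ when a few negative scores are very large. Convexity of $L$ together with the strict gap $\tau n < k$ is precisely what converts the bound on the summed loss into the pointwise bound $\bar s \le b$.
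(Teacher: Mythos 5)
Your proof is correct and follows essentially the same route as the paper: the first inequality comes from the centroid of the top-$k$ negative scores dominating the $k$-th largest one together with $L \geq \mathbb{I}$, and the second from the chain $kL(0) \geq k > \tau n \geq \sum_{j=1}^{k} L(b - f(\bm{x}_{[j]}^-)) \geq kL(b - \bar{s})$ via nonnegativity and Jensen's inequality, which forces $b > \bar{s}$ and lets monotonicity of $L$ finish. The paper's own write-up is exactly this argument, only more tersely stated.
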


Thus, in general, there is an exact gap between our risk $\bar{R}_1$ and convex constrained optimization risk $\bar{R}_2$.  In practice one may prefer a tighter approximation since it represents the original objective better. Moreover, our Theorem \ref{bound p} also achieves a tighter bound on the generalization error rate, by considering empirical risk $\bar{R}_1$.

\textbf{Ranking-Thresholding}~Traditional bipartite ranking methods usually have super-linear training complexity. Compared with them, the main advantage of our algorithm comes from its linear-time conplexity in each iteration without any convergence rate (please refer to Table \ref{tabcomp}). We named our algorithm $\tau$-FPL, and give a detailed description of how it works in the next sections.

\section{Tolerance-Constrained False Positive Learning}
Based on previous discussion, our method will be divided into two stages, namely \emph{scoring} and \emph{thresholding}. In scoring, a function $f(\cdot)$ is learnt to maximize the probability of giving higher a score to positive instances than the centroid of top $\tau$ percent of the negative instances. In thresholding, a suitable threshold $b$ will be chosen, and the final prediction of an instance $\bm{x}$ can be obtained by
\begin{equation}
y = \mathrm{sgn}\left(f(\bm{x}) - b\right) \ .
\end{equation}

\subsection{Tolerance-Constrained Ranking}
In (\ref{new rank plm}), we consider linear scoring function $f(\bm{x}) = \bm{w}^{\top}\bm{x}$, where $\bm{w} \in \mathbb{R}^d$ is the weight vector to be learned, $\text{L}_2$ regularization and replace $\mathbb{I}(u < 0)$ by some of its convex surrogate $l(\cdot )$. Kernel methods can be used for nonlinear ranking functions. As a result, the learning problem is
\begin{equation}\label{learning problem}
\min \limits_{\bm{w}} \frac{1}{m}\sum \nolimits_{i=1}^m l\left(\bm{w}^{\top}\bm{x}_i^+ - \frac{1}{k}\sum \nolimits_{j=1}^k\bm{w}^{\top}\bm{x}_{[j]}^-\right) + \frac{R}{2}\|\bm{w}\|^2,
\end{equation}
where $R > 0$ is the regularization parameter, and $k = \lfloor \tau n \rfloor + 1$.

Directly minimizing (\ref{learning problem}) can be challenging due to the $[\cdot]$ operator, we address it by considering its dual,
\begin{thm}\label{dual form}
	Define $\bm{X}^+=[\bm{x}_1^+, ..., \bm{x}_m^+]^{\top}$ and $\bm{X}^-=[\bm{x}_1^-, ..., \bm{x}_m^-]^{\top}$ be the matrix containing positive and negative instances in their rows respectively, the dual problem of (\ref{learning problem}) can be written by
	\begin{eqnarray}\label{dual problem}
	\min \limits_{(\bm{\alpha}, \bm{\beta} \in \Gamma_k)}\! g(\bm{\alpha}, \bm{\beta}) \!\! = \!\!\frac{1}{2mR}||\bm{\alpha}^{\top}\bm{X}^+ \!-\! \bm{\beta}^{\top}\bm{X}^-||^2 + \sum_{i=1}^m l^*(-\alpha_i),
	\end{eqnarray}
	where $\bm{\alpha}$ and $\bm{\beta}$ are dual variables, $l^*(\cdot)$ is the convex conjugate of $ l(\cdot)$, and the domain $\Gamma_k$ is defined as
	\begin{eqnarray}\label{dual set}
	\nonumber \Gamma_k = \left\{\bm{\alpha} \in \mathbb{R}_+^m, \bm{\beta} \in \mathbb{R}_+^n \mid \sum_{i=1}^m \alpha_i = \sum_{j=1}^n \beta_j; \beta_j \leq \frac{1}{k}\sum_{i=1}^m \alpha_i, \forall j\right\}.
	\end{eqnarray}
	Let $\bm{\alpha}^*$ and $\bm{\beta}^*$ be the optimal solution of (\ref{dual problem}), the optimal
	\begin{equation}
	\bm{w}^* = (mR)^{-1}(\bm{\alpha}^{*\top}\bm{X}^+ - \bm{\beta}^{*\top}\bm{X}^-)^{\top}. 
	\end{equation}
\end{thm}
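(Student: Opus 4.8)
The plan is to obtain (\ref{dual problem}) by Lagrangian duality, and the one genuine obstacle is the non-smooth centroid term $\frac{1}{k}\sum_{j=1}^k\bm{w}^{\top}\bm{x}_{[j]}^-$, whose dependence on $\bm{w}$ is implicit through the sorting $[\cdot]$. The key observation that removes this obstacle is that the average of the $k$ largest negative scores is a support function of a scaled simplex: writing $\bm{s}=\bm{X}^-\bm{w}$, one has $\frac{1}{k}\sum_{j=1}^k s_{[j]}=\max_{\bm{b}\in\mathcal{B}}\bm{b}^{\top}\bm{s}$ with $\mathcal{B}=\{\bm{b}\in\mathbb{R}^n: 0\le b_j\le 1/k,\ \mathbf{1}^{\top}\bm{b}=1\}$ (since $k$ is a positive integer, the vertices of $\mathcal{B}$ are exactly the uniform selections of $k$ negatives, so the maximizing vertex picks the top $k$). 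This rewrites the centroid as a maximum of linear forms, which is convex in $\bm{w}$, so the objective in (\ref{learning problem}) is convex and the problem is amenable to duality.

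First I would introduce a scalar $c$ for the centroid and recast (\ref{learning problem}) as minimizing $\frac1m\sum_i l(\bm{w}^{\top}\bm{x}_i^+-c)+\frac R2\|\bm{w}\|^2$ subject to $\bm{b}^{\top}\bm{X}^-\bm{w}\le c$ for all $\bm{b}\in\mathcal{B}$. Because $l$ is non-increasing, the objective is increasing in $c$, so at the optimum $c=\max_{\bm{b}\in\mathcal{B}}\bm{b}^{\top}\bm{X}^-\bm{w}$ is recovered and the reformulation is exact. Attaching a multiplier $\mu\ge0$ to the single convex constraint $\max_{\bm{b}\in\mathcal{B}}\bm{b}^{\top}\bm{X}^-\bm{w}\le c$ and absorbing $\mu$ into the maximizing direction via $\bm{\beta}=\mu\bm{b}$, the penalty $\mu(\max_{\bm{b}}\bm{b}^{\top}\bm{X}^-\bm{w}-c)$ becomes $\bm{\beta}^{\top}\bm{X}^-\bm{w}-(\mathbf{1}^{\top}\bm{\beta})c$, where $\bm{\beta}$ now ranges over the cone $\bigcup_{\mu\ge0}\mu\mathcal{B}=\{\bm{\beta}\ge0:\beta_j\le\tfrac1k\mathbf{1}^{\top}\bm{\beta}\}$. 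This single step is what produces both the box-type constraint $\beta_j\le\frac1k\sum_j\beta_j$ and the non-negativity of $\bm{\beta}$ that appear in $\Gamma_k$.

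Next I would peel off the loss by introducing margins $\xi_i=\bm{w}^{\top}\bm{x}_i^+-c$ with multipliers I rename $\alpha_i$. Minimizing the Lagrangian over $\bm{\xi}$ produces the conjugate terms $\sum_i l^*(-\alpha_i)$, and the domain of $l^*$ forces $\alpha_i\ge0$ because $l$ is non-increasing; minimizing over the free variable $c$ forces the equality $\sum_i\alpha_i=\sum_j\beta_j$; and minimizing the quadratic-plus-linear part $\frac R2\|\bm{w}\|^2+\bm{w}^{\top}(\bm{X}^{-\top}\bm{\beta}-\tfrac1m\bm{X}^{+\top}\bm{\alpha})$ in $\bm{w}$ by completing the square yields the quadratic penalty $\frac{1}{2mR}\|\bm{\alpha}^{\top}\bm{X}^+-\bm{\beta}^{\top}\bm{X}^-\|^2$ together with the stationarity identity $\bm{w}^*=(mR)^{-1}(\bm{\alpha}^{*\top}\bm{X}^+-\bm{\beta}^{*\top}\bm{X}^-)^{\top}$. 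Collecting the three emergent conditions ($\bm{\alpha},\bm{\beta}\ge0$, $\sum_i\alpha_i=\sum_j\beta_j$, and $\beta_j\le\frac1k\sum_i\alpha_i$) reconstitutes exactly $\Gamma_k$, and the assembled objective is $g(\bm{\alpha},\bm{\beta})$.

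Finally I would justify the two min--max interchanges: the primal is convex with linear coupling to $c$, and Slater's condition is trivially satisfied (fix any $\bm{w}$ and take $c$ large), so strong duality holds and the swaps are valid. The hard part is conceptual rather than computational, namely the support-function rewriting of the centroid; after that the derivation is the standard regularized-risk dualization. I expect the only fiddly bookkeeping to be the $1/m$ scalings (equivalently a rescaling $\bm{\beta}\mapsto m\bm{\beta}$) needed to match the stated constants, and the precise identification of the cone $\bigcup_{\mu\ge0}\mu\mathcal{B}$ with the constraint $\beta_j\le\frac1k\sum_i\alpha_i$.
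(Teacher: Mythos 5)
Your proposal is correct and follows essentially the same route as the paper's proof: both rest on writing the top-$k$ average as the support function of the capped simplex, Fenchel-conjugating the loss (with $\alpha_i \ge 0$ forced by the domain of $l^*$), the change of variables $\bm{\beta}=\mu\bm{b}$ (the paper's $\beta_j=\frac{1}{k}p_j\sum_i\alpha_i$) that generates exactly the constraints of $\Gamma_k$, a justified min--max swap, and explicit minimization of the quadratic in $\bm{w}$ to obtain both the $\frac{1}{2mR}\|\bm{\alpha}^{\top}\bm{X}^+-\bm{\beta}^{\top}\bm{X}^-\|^2$ term and the stationarity formula for $\bm{w}^*$. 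Your packaging via the epigraph variable $c$ and margin variables $\xi_i$, with the equality $\sum_i\alpha_i=\sum_j\beta_j$ arising from stationarity in $c$ rather than directly from $\sum_j p_j=k$, is only a cosmetic difference from the paper's direct substitution of the two max-representations.
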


According to Theorem 1, learning scoring function $f$ is equivalent to learning the dual variables $\bm{\alpha}$ and $\bm{\beta}$ by solving problem \eqref{dual problem}. Its optimization naturally falls into the area of projected gradient method.  Here we choose $l(u)=[1-u]_+^2$ where $[x]_+ \triangleq \max\{x, 0\}$ due to its simplicity of conjugation. The key steps are summarized in Algorithm \ref{alg:t-FPL}. At each iteration, we first update solution by the gradient of the objective function $g(\bm{\alpha}, \bm{\beta})$, then project the  dual solution onto the feasible set $\Gamma_k$. In the sequel, we will show that this projection problem can be efficiently solved in {\it linear time}. In practice, since $g(\cdot)$ is smooth, we also leverage Nesterov's method to further accelerate the convergence of our algorithm. Nesterov's method \citep{Nesterov03} achieves $O(1/T^2)$ convergence rate for smooth objective function, where $T$ is the number of iterations.

\begin{algorithm}[htb!]
	\caption{$\tau$-FPL Ranking} \label{alg:t-FPL}
	\begin{algorithmic}[1]
		\REQUIRE $X^+ \in \mathbb{R}^{m \times d}, X^- \in \mathbb{R}^{n \times d}$, maximal FPR tolerance $\tau$, regularization parameter $R$, stopping condition $\epsilon$
		\STATE
		Randomly initialize $\bm{\alpha_0}$ and $\bm{\beta_0}$\\
		\STATE Set counter: $t \leftarrow 0$\\
		\WHILE{$t = 0$ \textbf{or} $|g(\bm{\alpha_{t}}, \bm{\beta_{t}}) - g(\bm{\alpha_{t-1}}, \bm{\beta_{t-1}})| > \epsilon$} 
		\STATE Compute gradient of $g(\cdot)$ at point $(\bm{\alpha_t}, \bm{\beta_t})$\\
		\STATE Compute $\bm{\alpha_{t+1}'}, \bm{\beta_{t+1}'}$ by gradient descent;\\
		\STATE Project $\bm{\alpha_{t+1}'}, \bm{\beta_{t+1}'}$ onto the feasible set $\Gamma_k$:
		\begin{equation*}
		(\bm{\alpha_{t+1}}, \bm{\beta_{t+1}}) \leftarrow \Pi_{\Gamma_k}(\bm{\alpha_{t+1}'}, \bm{\beta_{t+1}'})
		\end{equation*}
		\STATE Update counter: $t \leftarrow t+1$;
		\ENDWHILE
		\STATE Return $\bm{w} \leftarrow (mR)^{-1}(\bm{\alpha_{t}}^{\top}X^+ - \bm{\beta_{t}}^{\top}X^-)^{\top}$
	\end{algorithmic}
\end{algorithm}

\subsection{Linear Time Projection onto the Top-k Simplex}\label{proj sec}
One of our main technical results is a \emph{linear time} projection algorithm onto $\Gamma_k$, even in the case of $k$ is close to $n$. For clear notations, we reformulate the projection problem as
\begin{equation}\label{proj plm}
\begin{split}
\min_{\bm{\alpha}    \geq 0, \bm{\beta} \geq 0} ~ & ~ \frac{1}{2}||\bm{\alpha} - \bm{\alpha}^0||^2 + \frac{1}{2}||\bm{\beta} - \bm{\beta}^0||^2\\
\text{\small s.t.}~ & ~\sum \nolimits_{i=1}^m \alpha_i = \sum \nolimits_{j=1}^n \beta_j, \quad \beta_j \leq \frac{1}{k}\sum \nolimits_{i=1}^m \alpha_i, \forall j.
\end{split}
\end{equation}
It should be noted that, many Euclidean projection problems studied in the literature can be seen as a special case of this problem. If the term $\sum_{i=1}^m \alpha_i$ is fixed, or replaced by a constant upper bound $C$, we obtain a well studied case of \emph{continuous quadratic knapsack problem} (CQKP)
\begin{align*}
\min_{\bm{\beta}} ~~ ||\bm{\beta} - \bm{\beta}^0||^2 \quad
\textrm{\small s.t.} ~\sum \nolimits_{i=1}^n \beta_i \ \leq\ C, 0\leq \beta_i \leq C_1 \ ,
\end{align*}
where $C_1 = C/k$. Several efficient methods based on median-selecting or variable fixing techniques are available \citep{patriksson2008survey}. On the other hand, if $k=1$, all  upper bounded constraints are automatically satisfied and can be omitted. Such special case has been studied,  for example, in \citep{liu2009efficient} and \citep{li2014top}, both of which achieve $O(n)$ complexity.

Unfortunately, none of those above methods can be directly applied to solving the generalized case (\ref{proj plm}), due to its property of unfixed upper-bound constraint on $\beta$ when $k >1$. To our knowledge, the only attempt to address the problem of unfixed upper bound is \citep{lapin2015top}. They solve a similar (but simpler) problem
\[
\min_{\beta} ~~ ||\bm{\beta} - \bm{\beta}^0||^2 ~~ \textrm{\small s.t.}~~ 0 \leq \beta_j \leq \frac{1}{k}\sum \nolimits_{i=1}^n \beta_i
\]
based on sorting and exhaustive search and their method achieves a runtime complexity $O(n\log(n) + kn)$, which is super-linear and even quadratic when $k$ and $n$ are linearly dependent. By contrast, our proposed method can be applied to both of the aforementioned special cases with minor changes and remains $O(n)$ complexity. The notable characteristic of our method is the efficient combination of bisection and divide-and-conquer: the former offers the guarantee of worst complexity, and the latter significantly reduces the large constant factor of bisection method.

We first introduce the following theorem, which gives a detailed description of the solution for (\ref{proj plm}).
\begin{thm}\label{sol of proj}
	$(\bm{\alpha}^*\in \mathbb{R}^m, \bm{\beta}^*\in \mathbb{R}^n)$ is the optimal solution of (\ref{proj plm}) if and only if there exist dual variables $C^* \geq 0$, $\lambda^*$, $\mu^* \in \mathbb{R}$ satisfy the following system of linear constraints:
	\begin{eqnarray}
	C^*&=&\sum \nolimits_{i=1}^m [\alpha_i^0 - \lambda^*]_+ \label{def C}\\
	C^*&=&\sum\nolimits_{j=1}^n \min\{[\beta_j^0 - \mu^*]_+, C^*/k\}\label{def mu}\\
	0&=&\lambda^* + \mu^* + \frac{1}{k} \sum\nolimits_{j=1}^n\left[ \beta_j^0 - \mu^* - C^*/k \right]_+ \label{3 var}
	\end{eqnarray}
	and $\alpha_i^* = [\alpha_i^0 - \lambda^*]_+ \label{def alpha}$, $\beta_j^* = \min\{[\beta_j^0 - \mu^*]_+, {C^*}/{k}\} \label{def beta}$.
\end{thm}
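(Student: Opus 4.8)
The plan is to recognize (\ref{proj plm}) as a strongly convex quadratic program with affine constraints, so that a unique minimizer exists and the Karush--Kuhn--Tucker (KKT) conditions are \emph{both} necessary and sufficient for optimality: the linear constraint qualification holds automatically because every constraint is affine, and the feasible set is nonempty since $\bm\alpha=\bm 0,\bm\beta=\bm 0$ satisfies all constraints. The whole statement will then follow by writing down the KKT system and massaging the multipliers into the compact closed form claimed in the theorem.

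First I would introduce the scalar $C=\sum_{i=1}^m\alpha_i$ so that the coupling constraint reads $\beta_j\le C/k$, and form the Lagrangian
\begin{equation*}
\mathcal L = \tfrac12\|\bm\alpha-\bm\alpha^0\|^2+\tfrac12\|\bm\beta-\bm\beta^0\|^2 - \sum_i p_i\alpha_i - \sum_j q_j\beta_j + \sum_j\gamma_j\!\left(\beta_j-\tfrac{C}{k}\right)+\nu\!\left(\textstyle\sum_i\alpha_i-\sum_j\beta_j\right),
\end{equation*}
with $p_i,q_j,\gamma_j\ge0$ and $\nu\in\mathbb R$. The essential structural observation is that the upper-bound constraint $\beta_j\le C/k$ depends on $\bm\alpha$ through $C$, so its multipliers $\gamma_j$ enter the stationarity condition for $\alpha_i$ as well as that for $\beta_j$. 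Setting the gradients to zero yields
\begin{equation*}
\alpha_i=\alpha_i^0+p_i-\Big(\nu-\tfrac1k\textstyle\sum_j\gamma_j\Big),\qquad \beta_j=\beta_j^0+q_j-\gamma_j+\nu.
\end{equation*}

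Next I would make the reparametrization that collapses the multipliers into two scalars: set $\lambda^*=\nu-\tfrac1k\sum_j\gamma_j$ and $\mu^*=-\nu$, and set $C^*=\sum_i\alpha_i^*$, so the stationarity relations become $\alpha_i=\alpha_i^0+p_i-\lambda^*$ and $\beta_j=\beta_j^0+q_j-\gamma_j-\mu^*$. A routine three-way case analysis using complementary slackness (both $\beta_j$-constraints inactive; lower bound $\beta_j=0$ active; upper bound $\beta_j=C^*/k$ active) then produces $\alpha_i^*=[\alpha_i^0-\lambda^*]_+$ and $\beta_j^*=\min\{[\beta_j^0-\mu^*]_+,\,C^*/k\}$, exactly as claimed; in the same analysis one reads off the uniform identity $\gamma_j=[\beta_j^0-\mu^*-C^*/k]_+$, valid in every case.

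The three scalar equations then assemble immediately: (\ref{def C}) is just $C^*=\sum_i\alpha_i^*$ combined with the $[\cdot]_+$ form; (\ref{def mu}) is the primal equality constraint $\sum_i\alpha_i^*=\sum_j\beta_j^*$ rewritten with the $\min$ form; and (\ref{3 var}) follows from $\lambda^*+\mu^*=-\tfrac1k\sum_j\gamma_j$ after substituting $\gamma_j=[\beta_j^0-\mu^*-C^*/k]_+$. For the converse direction I would, given scalars $(C^*,\lambda^*,\mu^*)$ solving the system, reconstruct $p_i,q_j,\gamma_j,\nu$ from the same formulas and verify primal feasibility ($\bm\alpha^*,\bm\beta^*\ge0$ and $\beta_j^*\le C^*/k$ by construction, the two sum constraints from (\ref{def C})--(\ref{def mu})) together with dual feasibility and complementary slackness, so that sufficiency of KKT for this convex program closes the argument. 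I expect the main obstacle to be bookkeeping rather than depth: correctly tracking how the single \emph{coupled} constraint feeds into both stationarity equations (which is what forces the nonstandard definition $\lambda^*=\nu-\tfrac1k\sum_j\gamma_j$), checking that $\gamma_j=[\beta_j^0-\mu^*-C^*/k]_+$ holds uniformly across all three cases, and handling the degenerate situation $C^*=0$ (which forces $\bm\alpha^*=\bm\beta^*=\bm 0$ and collapses the lower- and upper-bound faces for $\beta_j$).
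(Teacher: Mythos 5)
Your proposal is correct and follows essentially the same route as the paper: write the KKT system for the strongly convex QP, use complementary slackness to obtain the closed forms $\alpha_i^*=[\alpha_i^0-\lambda^*]_+$ and $\beta_j^*=\min\{[\beta_j^0-\mu^*]_+,C^*/k\}$ together with $\gamma_j=[\beta_j^0-\mu^*-C^*/k]_+$, and invoke sufficiency of KKT for convex problems for the converse. The only cosmetic difference is that the paper introduces $C$ as an independent primal variable whose stationarity condition directly yields (\ref{3 var}), whereas you recover the same identity by absorbing $\tfrac1k\sum_j\gamma_j$ into the definition of $\lambda^*$; both treatments must (and you correctly flag that yours does) handle the degenerate case $C^*=0$ separately.
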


Based on Theorem \ref{sol of proj}, the projection problem can be solved by finding the value of three dual variables $C$, $\lambda$ and $\mu$ that satisfy the above linear system. Here we first propose a basic bisection method which guarantees the worst time complexity. Similar method has also been used in \citep{liu2009efficient}. For brevity, we denote $\alpha_{[i]}^0$ and $\beta_{[i]}^0$ the $i$-largest dimension in $\bm{\alpha^0}$ and $\bm{\beta^0}$ respectively, and define function $C(\lambda)$, $\mu(C)$, $\delta(C)$ and $f(\lambda)$ as follows\footnote{Indeed, for some $C$, $\mu(C)$ is not one-valued and thus need more precise definition. Here we omit it for brevity, and leave details in section \ref{redifine_muc}.}:
\begin{eqnarray}
C(\lambda) &=& \sum \nolimits_{i=1}^m [\alpha_i^0 - \lambda]_+ \label{f:C}\\
\mu(C) &=& \mu \text{ satisfies } (\ref{def mu})\label{f:mu}\\
\delta(C) &=& \mu(C) + {C}/{k} \label{f:delta}\\
f(\lambda) &=& k\lambda + k\mu(C(\lambda)) + \sum \nolimits_{j=1}^n[\beta_j^0 - \delta(C(\lambda))]_+. \label{f:f}
\end{eqnarray}

The main idea of leveraging bisection to solve the system in theorem \ref{sol of proj} is to find the root of $f(\lambda) = 0$. In order to make bisection work, we need three conditions: $f$ should be continuous; the root of $f$ can be efficiently bracketed in a interval; and the value of $f$ at the two endpoints of this interval have opposite signs. Fortunately, based on the following three lemmas, all of those requirements can be satisfied.
\begin{lemma}\label{c=0}
	(Zero case) \quad $(\bm{0}^m, \bm{0}^n)$ is an optimal solution of (\ref{proj plm}) if and only if $k\alpha_{[1]}^0 + \sum_{j=1}^k \beta_{[j]}^0 \leq 0$.
\end{lemma}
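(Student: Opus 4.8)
The plan is to characterize when $(\bm{0}^m, \bm{0}^n)$ solves the projection problem (\ref{proj plm}) by appealing directly to the optimality conditions established in Theorem \ref{sol of proj}. Since $(\bm{0}, \bm{0})$ is clearly feasible, it is optimal if and only if there exist dual variables $C^* \geq 0$, $\lambda^*$, $\mu^* \in \mathbb{R}$ satisfying (\ref{def C})--(\ref{3 var}) with $\alpha_i^* = [\alpha_i^0 - \lambda^*]_+ = 0$ for all $i$ and $\beta_j^* = \min\{[\beta_j^0 - \mu^*]_+, C^*/k\} = 0$ for all $j$. The first condition forces $\lambda^* \geq \alpha_{[1]}^0$ (so that every $\alpha_i^0 - \lambda^* \leq 0$), and by (\ref{def C}) this gives $C^* = 0$. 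Substituting $C^* = 0$ into (\ref{def mu}) and (\ref{3 var}) then pins down the remaining relationship among the negative scores and $\mu^*$.

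First I would run the forward direction: assume $(\bm{0},\bm{0})$ is optimal, extract $\lambda^* \geq \alpha_{[1]}^0$ and $C^*=0$ as above, and then feed $C^*=0$ into (\ref{3 var}), which reduces to $0 = \lambda^* + \mu^* + \frac{1}{k}\sum_{j=1}^n [\beta_j^0 - \mu^*]_+$ (since $C^*/k = 0$). The condition $\beta_j^* = 0$ combined with $C^*/k = 0$ is automatic, but I must be careful that the ``$\min$'' in (\ref{def mu}) and (\ref{3 var}) behaves correctly when $C^* = 0$; I expect $\mu^*$ must be large enough that the positive parts collapse appropriately, or I track exactly how many $\beta_j^0$ exceed $\mu^*$. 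Eliminating $\mu^*$ between the two relations and using $\lambda^* \geq \alpha_{[1]}^0$ should yield the stated inequality $k\alpha_{[1]}^0 + \sum_{j=1}^k \beta_{[j]}^0 \leq 0$, where the sum over the top $k$ negative coordinates arises naturally from the structure of $\sum_j [\beta_j^0 - \mu^*]_+$ at the threshold value of $\mu^*$.

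For the reverse direction, I would assume $k\alpha_{[1]}^0 + \sum_{j=1}^k \beta_{[j]}^0 \leq 0$ and explicitly construct dual variables certifying optimality: set $C^* = 0$, choose $\lambda^* = \alpha_{[1]}^0$ (or any value $\geq \alpha_{[1]}^0$ consistent with the other equations), and solve for $\mu^*$ from the reduced form of (\ref{3 var}). Verifying that this choice satisfies all three linear constraints with $\alpha_i^* = \beta_j^* = 0$ then certifies $(\bm{0},\bm{0})$ as optimal by the ``if'' part of Theorem \ref{sol of proj}. As an alternative, cleaner route I might bypass the KKT machinery entirely and argue from first-order optimality of the convex objective over the feasible cone: $(\bm{0},\bm{0})$ is optimal iff the gradient $(-\bm{\alpha}^0, -\bm{\beta}^0)$ makes a nonnegative inner product with every feasible direction, and the extreme feasible directions (putting mass on the largest $\alpha^0_{[1]}$ and spreading the matched negative mass across the top $k$ coordinates $\beta^0_{[j]}$, as forced by the $\beta_j \leq \frac{1}{k}\sum_i\alpha_i$ constraint) reproduce exactly the quantity $k\alpha_{[1]}^0 + \sum_{j=1}^k \beta_{[j]}^0$.

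The main obstacle will be the bookkeeping around the piecewise-linear $\min$ and $[\cdot]_+$ operators when $C^* = 0$, particularly the footnoted subtlety that $\mu(C)$ need not be single-valued; I expect the correct threshold choice of $\mu^*$ to be where exactly the top $k$ negative coordinates remain active, which is precisely what produces the $\sum_{j=1}^k \beta_{[j]}^0$ term. Establishing that this is the \emph{binding} configuration — rather than some other partition of the support — is the step that requires care, and the cone/direction argument is likely the most transparent way to see why the worst direction combines the single largest positive coordinate with the top-$k$ average of the negative coordinates.
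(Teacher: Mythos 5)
Your primary route is essentially the paper's: feasibility of $(\bm{0}^m,\bm{0}^n)$ plus the dual system of Theorem \ref{sol of proj}, with $\lambda^*\geq\alpha_{[1]}^0$ forcing $C^*=0$ via (\ref{def C}), followed by an analysis of (\ref{3 var}). One caution on that route: when $C^*=0$, equation (\ref{def mu}) becomes vacuous, so $\mu^*$ is \emph{not} pinned to any threshold value; the step that actually closes the necessity direction (and the one the paper takes) is to observe that $k\mu+\sum_{j}[\beta_j^0-\mu]_+$ is piecewise linear in $\mu$ with slope $k-t$ on the interval where exactly $t$ coordinates are active, hence has global minimum $\sum_{j=1}^k\beta_{[j]}^0$, so that $0=k\lambda^*+k\mu^*+\sum_j[\beta_j^0-\mu^*]_+\geq k\alpha_{[1]}^0+\sum_{j=1}^k\beta_{[j]}^0$ holds for \emph{whatever} $\mu^*$ certifies optimality. ``Eliminating $\mu^*$ at the threshold where the top $k$ coordinates are active'' only supplies the sufficiency construction ($\mu^*=\beta_{[k]}^0$, $\lambda^*=-\frac{1}{k}\sum_{j=1}^k\beta_{[j]}^0$), which is exactly the paper's converse. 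Your alternative cone argument is genuinely different from the paper and, in my view, cleaner: $\Gamma_k$ is a closed convex cone, so $(\bm{0},\bm{0})$ is the projection of $(\bm{\alpha}^0,\bm{\beta}^0)$ iff $\langle\bm{\alpha}^0,\bm{\alpha}\rangle+\langle\bm{\beta}^0,\bm{\beta}\rangle\leq 0$ for all $(\bm{\alpha},\bm{\beta})\in\Gamma_k$; maximizing the left side at fixed mass $s=\sum_i\alpha_i$ concentrates $\bm{\alpha}$ on the largest coordinate and places $s/k$ on each of the top $k$ coordinates of $\bm{\beta}^0$, giving $\frac{s}{k}\left(k\alpha_{[1]}^0+\sum_{j=1}^k\beta_{[j]}^0\right)$, which is nonpositive for all $s\geq 0$ precisely under the stated condition. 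That version sidesteps the multi-valuedness of $\mu(C)$ and the $[\cdot]_+$ bookkeeping entirely and identifies the ``binding configuration'' in one stroke; the paper's KKT route has the advantage of reusing the machinery already needed for the bisection algorithm.
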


\begin{lemma}\label{bound}
	(Bracketing $\lambda^*$)\quad If $C^* > 0$, $\lambda^* \in (-\beta_{[1]}^0,\ \alpha_{[1]}^0)$.
\end{lemma}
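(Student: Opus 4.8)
The plan is to bracket the root $\lambda^*$ by exploiting the monotonicity structure already present in Theorem~\ref{sol of proj}. First I would recall from Lemma~\ref{c=0} that $C^* > 0$ is equivalent to $k\alpha_{[1]}^0 + \sum_{j=1}^k \beta_{[j]}^0 > 0$, so that under the hypothesis $C^* > 0$ the system \eqref{def C}--\eqref{3 var} admits a nontrivial solution with $C^* = \sum_{i=1}^m[\alpha_i^0 - \lambda^*]_+ > 0$. The immediate consequence is the upper bound: since $C^*>0$ forces at least one positive summand in \eqref{def C}, we must have $\lambda^* < \alpha_{[1]}^0$, for otherwise $[\alpha_i^0 - \lambda^*]_+ = 0$ for every $i$ and $C^*$ would vanish. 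This half of the claim is essentially immediate from the definition of $C^*$.

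The harder half is the lower bound $\lambda^* > -\beta_{[1]}^0$. Here I would use the coupling equation \eqref{3 var}, which ties $\lambda^*$, $\mu^*$, and $C^*$ together. Rewriting \eqref{3 var} as
\begin{equation*}
\lambda^* = -\mu^* - \frac{1}{k}\sum_{j=1}^n\left[\beta_j^0 - \mu^* - C^*/k\right]_+,
\end{equation*}
I would argue that the right-hand side is bounded below. The strategy is to combine this with the defining relation \eqref{def mu} for $\mu^*$: because $\beta_j^* = \min\{[\beta_j^0 - \mu^*]_+,\, C^*/k\}$ and the $\beta_j^*$ sum to $C^* > 0$, at least one coordinate $\beta_j^*$ is strictly positive, which forces $[\beta_j^0 - \mu^*]_+ > 0$ for some $j$, hence $\mu^* < \beta_{[1]}^0$. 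I would then show that the extra nonnegative truncation term $\tfrac{1}{k}\sum_j[\beta_j^0 - \mu^* - C^*/k]_+$, when subtracted along with $\mu^*$, still leaves $\lambda^*$ above $-\beta_{[1]}^0$; the cleanest route is to bound the whole expression $\mu^* + \tfrac1k\sum_j[\beta_j^0-\mu^*-C^*/k]_+$ from above by $\beta_{[1]}^0$ using the fact that each truncated term is itself no larger than its untruncated counterpart and that the averaging factor $1/k$ together with the cap $C^*/k$ keeps the sum controlled.

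I expect the main obstacle to be the lower bound, specifically handling the interaction between $\mu^*$ and the capped sum in \eqref{3 var} cleanly. The difficulty is that $\mu^*$ and $C^*$ are determined jointly and implicitly, so a naive coordinatewise estimate may not directly yield the strict inequality. The key technical maneuver I anticipate is to split the index set into those $j$ with $\beta_j^0 - \mu^* \ge C^*/k$ (the capped coordinates) and the rest, and to use \eqref{def mu} to relate the number of capped coordinates to $C^*$. A useful sanity check along the way is the edge behavior: as the problem degenerates toward the zero case of Lemma~\ref{c=0}, the bracketing interval $(-\beta_{[1]}^0, \alpha_{[1]}^0)$ should remain nonempty and the root should approach an endpoint, which I would verify to confirm the bounds are tight and correctly oriented. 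Once both endpoints are established, continuity of $f(\lambda)$ (to be argued separately) together with a sign change at the endpoints will complete the setup for the bisection method.
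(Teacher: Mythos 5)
Your proposal is correct and takes essentially the same route as the paper: the upper bound $\lambda^*<\alpha_{[1]}^0$ falls out of \eqref{def C} exactly as you say, and the lower bound comes from rewriting \eqref{3 var} and showing $\mu^* + \tfrac{1}{k}\sum_{j}[\beta_j^0-\mu^*-C^*/k]_+ < \beta_{[1]}^0$. The decisive ingredient you flag---that \eqref{def mu} forces at most $k$ coordinates to be capped, so the truncated sum is bounded by $k\,[\beta_{[1]}^0-\mu^*-C^*/k]_+$---is precisely what the paper uses (there phrased as $\mu^*+C^*/k>\beta_{[k+1]}^0$, drawn from its analysis of $\mu(C)$), after which the two cases (capped sum zero, or positive) give $\lambda^*=-\mu^*>-\beta_{[1]}^0$ and $\lambda^*\geq-\beta_{[1]}^0+C^*/k>-\beta_{[1]}^0$ respectively.
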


\begin{lemma}\label{inc-dec}
	(Monotonicity and convexity)
	\begin{enumerate}
		\item $C(\lambda)$ is convex, continuous and strictly decreasing in $(-\infty$, $\alpha_{[1]}^0)$; \label{c1}
		\item $\mu(C)$ is continuous, monotonically decreasing in $(0, +\infty)$; \label{c2}
		\item $\delta(C)$ is continuous, strictly increasing in $(0, +\infty)$; \label{c3}
		\item $f(\lambda)$ is continuous, strictly increasing in $(-\infty, \alpha_{[1]}^0)$. \label{c4}
	\end{enumerate}
	Furthermore, we can define the inverse function of $C(\lambda)$ as $\lambda(C)$, and rewrite $f(\lambda)$ as:
	\begin{equation}
	f(\lambda(C)) = k\lambda(C) + k\mu(C) + \sum \nolimits_{j=1}^n[\beta_j^0 - \delta(C)]_+,
	\end{equation}
	it is a convex function of $C$, strictly decreasing in $(0, +\infty)$.
\end{lemma}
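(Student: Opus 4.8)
The plan is to handle the four building-block functions in the order in which they depend on one another, saving the one genuinely delicate point — convexity of the reparametrized $f$ — for last. Claim (\ref{c1}) is immediate: $C(\lambda)=\sum_{i=1}^m[\alpha_i^0-\lambda]_+$ is a finite sum of the functions $[\alpha_i^0-\lambda]_+=\max\{\alpha_i^0-\lambda,0\}$, each continuous, convex and non-increasing in $\lambda$, so the sum inherits all three; strict decrease on $(-\infty,\alpha_{[1]}^0)$ holds because for every such $\lambda$ the leading term $[\alpha_{[1]}^0-\lambda]_+$ is active with slope $-1$. Since moreover $C(\lambda)\to+\infty$ as $\lambda\to-\infty$ and $C(\alpha_{[1]}^0)=0$, $C(\cdot)$ is a piecewise-linear convex bijection from $(-\infty,\alpha_{[1]}^0)$ onto $(0,+\infty)$; I record its inverse $\lambda(C)$ here and use this change of variable throughout, which also explains why the remaining functions are stated on $(0,+\infty)$.

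For claims (\ref{c2}) and (\ref{c3}) I would work directly from the defining identity (\ref{def mu}). For fixed $C>0$ the map $\mu\mapsto\sum_{j}\min\{[\beta_j^0-\mu]_+,C/k\}$ is continuous and non-increasing, running from $n\,C/k\ge C$ (using $k\le n$) down to $0$, so a root $\mu(C)$ exists and is unique away from the degenerate flat pieces deferred to Section~\ref{redifine_muc}. The engine behind both monotonicity statements is a counting identity: partition the indices into \emph{capped} ($\beta_j^0\ge\delta$, with $\delta=\mu+C/k$), \emph{active} ($\mu<\beta_j^0<\delta$) and \emph{zero} ($\beta_j^0\le\mu$), of sizes $p$, $q$, $n-p-q$. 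Substituting into (\ref{def mu}) gives $\sum_{\text{active}}(\beta_j^0-\mu)=C(k-p)/k$, and bounding each active summand strictly between $0$ and $C/k$ forces $p<k$ and $p+q>k$ whenever $q\ge1$. Implicit differentiation of (\ref{def mu}) on a smooth piece then yields
\[
\frac{d\mu}{dC}=\frac{p-k}{kq},\qquad \frac{d\delta}{dC}=\frac{p+q-k}{kq},
\]
which are negative and positive respectively by the counting identity; with continuity this gives (\ref{c2}) and (\ref{c3}).

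To get (\ref{c4}) and the final reformulation I pass to the variable $C$ through $\lambda(C)$. Because $\lambda(C)$ is strictly decreasing, ``$f$ strictly increasing in $\lambda$'' is equivalent to ``$f(\lambda(C))$ strictly decreasing in $C$'', so it suffices to analyze $f(\lambda(C))=k\lambda(C)+B(C)$ with $B(C)=k\mu(C)+\sum_{j}[\beta_j^0-\delta(C)]_+$. The term $k\lambda(C)$ is convex because the inverse of a convex strictly-decreasing function is convex (formally $\lambda''=-C''/(C')^3\ge0$), and here it is piecewise linear. Using the partition, $B$ is piecewise linear with slope on each smooth piece equal, after simplification, to
\[
\frac{dB}{dC}=\frac{p-k}{q}-\frac{p(p+q-k)}{kq}=-\frac{(k-p)^2}{kq}-\frac{p}{k},
\]
so both $k\lambda(C)$ and $B(C)$ are strictly decreasing, establishing (\ref{c4}).

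The main obstacle is the convexity of $B(C)$, i.e.\ showing this piecewise-constant slope is non-decreasing in $C$. As $C$ grows, $\delta$ increases and $\mu$ decreases, so the partition changes only at breakpoints of two types: \emph{(a)} a capped index turns active, $(p,q)\mapsto(p-1,q+1)$, and \emph{(b)} a zero index turns active, $(p,q)\mapsto(p,q+1)$. Writing $a=k-p$, type \emph{(b)} raises the slope at once since $-a^2/(kq)$ is non-decreasing in $q$, while for type \emph{(a)} the required inequality, after clearing the positive denominators, collapses to $(a-q)^2\ge0$. Hence every jump of the slope is upward, $B$ is convex, and $f(\lambda(C))=k\lambda(C)+B(C)$ is a sum of two convex strictly-decreasing functions — convex and strictly decreasing in $C$, exactly as claimed. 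I expect the only step needing real care to be the bookkeeping of how $p$ and $q$ migrate across breakpoints together with this $(a-q)^2\ge0$ reduction; everything else is continuity and sign-checking.
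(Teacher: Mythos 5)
Your proposal is correct and follows essentially the same route as the paper's proof: both arguments reduce everything to the piecewise\mbox{-}linear slopes obtained from the capped/active/zero index partition (your $(p,q)$ are the paper's $(j,\,i-j)$), arrive at the identical slope $\frac{xy}{k(x+y)}-1$ (with $x=p+q-k$, $y=k-p$) for $T(C)=k\mu(C)+\sum_j[\beta_j^0-\delta(C)]_+$, and prove convexity by showing this slope is non-decreasing in $C$ --- the paper by observing that $x$ and $y$ are both non-decreasing in $C$ and that $\frac{1}{k/x+k/y}$ is increasing in each, you by enumerating the two breakpoint-transition types and checking each jump via the $(a-q)^2\ge 0$ identity, which is the same computation packaged differently. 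The one thing you should add is the flat piece $C\in(0,C_0]$ (where $C_0=k(\beta_{[k]}^0-\beta_{[k+1]}^0)$), on which $q=0$ and your formulas degenerate: there $\mu(C)$ is constant by the redefinition of Section~\ref{redifine_muc}, $\delta'(C)=1/k>0$, and the slope of $T$ is $-1$, which is needed both to verify claim~\ref{c3} on all of $(0,+\infty)$ and to confirm that the slope still jumps upward at $C_0$ (the paper handles this case explicitly at the end of its proof).
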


Lemma \ref{c=0} deals with the special case of $C^*=0$. Lemma \ref{bound} and \ref{inc-dec} jointly ensure that bisection works when $C^* > 0$; Lemma \ref{bound} bounds $\lambda^*$;  Lemma \ref{inc-dec} shows that $f$ is continuous, and since it is also strictly increasing, the value of $f$ at two endpoints must has opposite sign.

\textbf{Basic method: bisection \& leverage convexity } We start from select current $\lambda$ in the range $(-\beta_{[1]}^0,\ \alpha_{[1]}^0) \triangleq [l, u]$. Then compute corresponding $C$ by (\ref{f:C}) in $O(m)$, and use the current $C$ to compute $\mu$ by (\ref{f:mu}). Computing $\mu$ can be completed in $O(n)$ by a well-designed median-selecting algorithm \citep{kiwiel2007linear}. With the current (i.e. updated) $C$, $\lambda$ and $\mu$ in hand, we can evaluate the sign of $f(\lambda)$ in $O(n)$ and determine the new bound of $\lambda$. In addition, the special case of $C=0$ can be checked using Lemma \ref{c=0} in $O(m + n)$ by a linear-time k-largest element selecting algorithm \citep{kiwiel2005floyd}. Since the bound of $\lambda$ is irrelevant to $m$ and $n$, the number of iteration for finding $\lambda^*$ is $\log(\frac{u - l}{\epsilon})$, where $\epsilon$ is the maximum tolerance of the error. Thus, the worst runtime of this algorithm is $O(m +n)$. Furthermore, we also leverage the convexity of $f(\lambda(C))$ and $C(\lambda)$ to further improve this algorithm, please refer to \citep{liu2009efficient} for more details about related techniques.

Although bisection solves the projections in linear time, it may lead to a slow convergence rate. We further improve the runtime complexity by reducing the constant factor $\log(\frac{u - l}{\epsilon})$. This technique benefits from exploiting the monotonicity of both functions $C(\lambda)$, $\mu(C)$, $\delta(C)$ and $f(\lambda)$, which have been stated in Lemma \ref{inc-dec}. Notice that, our method can also be used for finding the root of arbitary piecewise linear and monotone function, without the requirement of convexity.

\begin{algorithm}[htb!]
	\caption{Linear-time Projection on Top-k simplex} \label{alg:proj}
	\begin{algorithmic}[1]
		\REQUIRE $\bm{\alpha^0} \in \mathbb{R}^m, \bm{\beta^0} \in \mathbb{R}^n$, maximal accuracy $\epsilon$
		\STATE Calculate initial uncertainly intervals for $\lambda$, $C$, $\delta$ and $\mu$;
		\STATE Initialize breakpoint caches for $C(\lambda)$, $\mu(C)$, $\delta(C)$, $f(\lambda)$: 
		\begin{equation*}
		Cache_C \leftarrow \{\alpha_i^0 \mid \forall i\},~~ Cache_{\mu},  Cache_{\delta}, Cache_f \leftarrow \{\beta_j^0 \mid \forall j\}
		\end{equation*}
		\STATE Initialize partial sums of $C(\lambda)$, $\mu(C)$, $\delta(C)$, $f(\lambda)$ with zero;
		\STATE Set $t \leftarrow 0$ and $\lambda_0 \leftarrow (\alpha_{[1]}^0 - \beta_{[1]}^0)/2$;
		\WHILE{$t = 0$ \textbf{or} $|\lambda_{t} - \lambda_{t-1}| > \epsilon$}
		\STATE Calculate $C_t$, $\mu_t$, $\delta_t$, $f_t$(by leveraging corresponding caches and partial sums);
		\STATE Prune caches and update partial sums;
		\STATE Shrink intervals of $\lambda$, $C$, $\delta$ and $\mu$ based on sign of $f(\lambda_t)$;
		\STATE $t \leftarrow t + 1$;
		\STATE Set $\lambda_t$ as midpoint of current new interval
		\ENDWHILE
		\STATE Return $\lambda^* \leftarrow \lambda_t$, $\mu^* \leftarrow \mu_t$, $C^* \leftarrow C_t$
	\end{algorithmic}
\end{algorithm}

\textbf{Improved method: endpoints Divide \& Conquer } Lemma \ref{inc-dec} reveals an important chain monotonicity between the dual variables, which can used to improve the performance of our baseline method. The key steps are summarized in Algorithm \ref{alg:proj}. Denote the value of a variable $z$ in iteration $t$ as $z_t$. For instance, if $\lambda_t > \lambda_{t-1}$, from emma \ref{inc-dec} we have $C_t < C_{t-1}$, $\mu_t > \mu_{t-1}$ and $\delta_t < \delta_{t - 1}$.
This implies that we can set uncertainty intervals for both $\lambda$, $C$, $\mu$ and $\delta$. As the interval of $\lambda$ shrinking, lengths of these four intervals can be reduced simultaneously. On the other hand, notice that $C(\lambda)$ is indeed piecewise linear function (at most $m + 1$ segments), the computation of its value only contains a comparison between $\lambda_t$ and all of the $\alpha_i^0$s. By keeping a cache of $\alpha_i^0$s and discard those elements which are out of the current bound of $\lambda$ in advance, in each iteration we can reduce the expected comparison counts by half. A more complex but similar procedure can also be applied for computing $\mu(C)$, $\delta(C)$, and $f(\lambda)$, because both of these functions are piecewise linear and the main cost is the comparison with $O(m+n)$ endpoints.  As a result, for approximately linear function and evenly distributed breakpoints, if the first iteration of bisection costs $\gamma (m+n)$ time, the overall runtime of the projection algorithm will be $\gamma (m+n) + \gamma (m+n)/2 +... \leq 2\gamma (m+n)$, which is much less than the original bisection algorithm whose runtime is $\log(\frac{u - l}{\epsilon})\gamma (m+n)$. 

\subsection{Convergence and Computational Complexity}

Following immediately from the convergence result of Nesterov's method, we have:
\begin{thm}\label{conv rate}
	Let $\bm{\alpha}_T$ and $\bm{\beta}_T$ be the output from the $\tau$-FPL algorithm after T iterations, then $g(\bm{\alpha}_T,\bm{\beta}_T) \leq \min g(\bm{\alpha}, \bm{\beta}) + \epsilon$, where $T \geq O(1/\sqrt{\epsilon})$.
\end{thm}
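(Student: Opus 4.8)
The plan is to invoke the standard convergence guarantee for Nesterov's accelerated projected-gradient method, which states that for a convex objective $g$ with $L$-Lipschitz gradient minimized over a closed convex set via exact Euclidean projection, the iterates obey $g(\bm{x}_T) - \min g \le 2L\|\bm{x}_0 - \bm{x}^*\|^2/(T+1)^2$. All the work therefore reduces to checking the three hypotheses of this result for the dual problem (\ref{dual problem}): convexity of $g$ over $\Gamma_k$, Lipschitz-continuity of $\nabla g$, and closedness/convexity of $\Gamma_k$, so that the map $\Pi_{\Gamma_k}$ used in Algorithm \ref{alg:t-FPL} is the well-defined exact projection established in Section \ref{proj sec}.

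First I would verify convexity and smoothness of $g$. By Theorem \ref{dual form}, $g(\bm{\alpha},\bm{\beta}) = \frac{1}{2mR}\|\bm{\alpha}^\top\bm{X}^+ - \bm{\beta}^\top\bm{X}^-\|^2 + \sum_{i=1}^m l^*(-\alpha_i)$. The first term is a convex quadratic, and each $l^*$ is convex as a conjugate, so $g$ is convex. For the chosen surrogate $l(u)=[1-u]_+^2$ one computes $l^*(v)=v+\tfrac14 v^2$ for $v\le 0$, hence on the region $\alpha_i\ge 0$ of interest the second term equals $-\sum_i\alpha_i+\tfrac14\|\bm{\alpha}\|^2$, which is smooth. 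Consequently $g$ has a constant Hessian, and $\nabla g$ is Lipschitz with a constant $L$ bounded by the largest eigenvalue of that Hessian; using $\|\bm{x}\|\le 1$ for every instance gives an explicit bound of the form $L \le \frac{1}{mR}\left(\|\bm{X}^+\|^2+\|\bm{X}^-\|^2\right)+\tfrac12$, which is finite.

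Next I would check that $\Gamma_k$ is closed and convex: it is the intersection of the nonnegative orthant, the single linear equality $\sum_i\alpha_i=\sum_j\beta_j$, and the linear inequalities $\beta_j\le\frac1k\sum_i\alpha_i$, each a closed half-space or hyperplane. Hence the projection onto it is well defined, and by Section \ref{proj sec} it is computed exactly in linear time, so Algorithm \ref{alg:t-FPL} performs genuine projected-gradient steps rather than approximate ones.

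The remaining point — and the one I expect to require the most care — is to ensure that the distance $D=\|(\bm{\alpha}_0,\bm{\beta}_0)-(\bm{\alpha}^*,\bm{\beta}^*)\|$ appearing in the bound is finite, since $\Gamma_k$ is a cone and hence unbounded. This follows from coercivity of $g$ on $\Gamma_k$: the term $\tfrac14\|\bm{\alpha}\|^2$ forces $\bm{\alpha}$ into a bounded sublevel set, and because $0\le\beta_j\le\frac1k\sum_i\alpha_i$ a bound on $\bm{\alpha}$ yields a bound on $\bm{\beta}$, so a minimizer exists in a compact set and $D<\infty$. With convexity, $L$-smoothness, exact projection, and finite $D$ in hand, substituting into Nesterov's bound gives $g(\bm{\alpha}_T,\bm{\beta}_T)-\min g\le 2LD^2/(T+1)^2$; requiring the right-hand side to be at most $\epsilon$ and solving for $T$ yields $T\ge\sqrt{2LD^2/\epsilon}-1=O(1/\sqrt{\epsilon})$, as claimed.
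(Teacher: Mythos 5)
Your proposal is correct and follows essentially the same route as the paper, which simply cites Nesterov's $O(1/T^2)$ convergence guarantee for accelerated projected gradient on a smooth convex objective over a closed convex set. You additionally verify the hypotheses (convexity and smoothness of $g$ via the explicit conjugate $l^*(v)=v+\tfrac14 v^2$ for $v\le 0$, closedness and convexity of $\Gamma_k$, and finiteness of the initial distance via coercivity), which the paper leaves implicit.
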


\begin{table}[htb]
	\centering
	{
		\begin{tabular}{lcc}
			\hline
			\textbf{Algorithm} & \textbf{Training} & \textbf{Validation} \\
			\hline\hline
			$\tau$-FPL &$O(d(m+n)/T^2)$ & Linear\\\\
			
			TopPush &$O(d(m+n)/T^2)$ & Linear \\\\
			
			CS-SVM &$O(d(m+n)/T)$ & Quadratic \\\\
			
			$\text{SVM}_{\text{tight}}^{\text{pAUC}}$ &$O((m\log m+ n\log n + d(m+n))/T)$ & Linear\\\\
			
			Bipartite & $O((d(m+n) + (m+n)\log(m+n))/T)$ &\multirow{2}{*}{Linear}\\
			Ranking &  $\sim O(dmn + mn\log(mn)/\sqrt T)$ & \multirow{2}{*}{}\\
			\hline
		\end{tabular}
		\caption{Complexity comparison with SOTA approaches}
		\label{tabcomp}}
\end{table}
Finally, the computational cost of each iteration is dominated by the gradient evaluation and the projection step. Since the complexity of projection step is $O(m + n)$ and the cost of computing the gradient is $O(d(m + n))$, combining with Theorem \ref{conv rate} we have that: to find an $\epsilon$-suboptimal solution, the total computational complexity of $\tau$-FPL is $O(d(m+n)/\sqrt{\epsilon})$. Table \ref{tabcomp} compares the computational complexity of $\tau$-FPL with that of some state-of-the-art methods.  The order of validation complexity corresponds to the number of hyper-parameters. From this, it is easy to see that $\tau$-FPL is asymptotically more efficient.

\subsection{Out-of-Bootstrap Thresholding}\label{OOB}
In the thresholding stage, the task is to identify the boundary between the positive instances and $(1-\tau)$ percent of the negative instances. Though thresholding on the training set is commonly used in \citep{joachims1996probabilistic,davenport2010tuning,scheirer2013toward}, it may result in overfitting. Hence, we propose an out-of-bootstrap method to find a more accurate and stable threshold. At each time, we randomly split the training set into two sets $\mathcal{S}_1$ and $\mathcal{S}_2$, and then train on $\mathcal{S}_1$ as well as the select threshold on $\mathcal{S}_2$. The procedure can be running multiple rounds to make use of all the training data. Once the process is completed, we can obtain the final threshold by averaging. On the other hand, the final scoring function can be obtained by two ways: learn a scoring function using the full set of training data, or gather the weights learned in each previous round and average them. This method combines both the advantages of out-of-bootstrap and soft-thresholding techniques: accurate error estimation and reduced variance with little sacrifice on the bias, thus fits the setting of thresholding near the risk area.

\section{Theoretical Guarantees}\label{sec:theory}
Now we develop the theoretical guarantee for the scoring function, which bounds the probability of giving any positive instances higher score than $1-\tau$ proportion of negative instances. To this end, we first define $h(\bm{x}, f)$, the probability for any negative instance to be ranked above $x$ using $f$, i.e. $h(x, f) = \mathbb{E}_{x^- \sim \mathbb{P}^-}[\mathbb{I}(f(\bm{x}) \leq f(\bm{x}^-))]$, and then measure the quality of $f$ by $P(f, \tau) = \mathbb{P}_{\bm{x}^+ \sim \mathbb{P}^+}(h(\bm{x}^+, f) \geq \tau)$, which is the probability of giving any positive instances lower score than $\tau$ percent of negative instances. The following theorem bounds $P(f, \tau)$ by the empirical loss $L_{\bar{k}}$.
\begin{thm}\label{bound p}
	Given training data $\mathcal{S}$ consisting of $m$ independent instances from distribution $\mathbb{P}^+$ and $n$ independent instances from distribution $\mathbb{P}^-$, let $f^*$ be the optimal solution to the problem (\ref{learning problem}). Assume $m \geq 12$ and $n \gg s$. We have, for proper $R$ and any $k \leq n$, with a probability at least $1-2e^{-s}$,
	\begin{equation}\label{p bound eq}
	P(f^*, \tau) \leq L_{\bar{k}}+ O\left(\sqrt{(s + \log(m)/m)}\right),
	\end{equation}
	where $\tau = k/n + O\left(\sqrt{\log m/n}\right)$, and $L_{\bar{k}} = \frac{1}{m} \sum_{i=1}^m l\left(f^*(\bm{x}_i^+) - \frac{1}{k}\sum_{j=1}^k f^*(\bm{x}_{[j]}^-)\right)$.
\end{thm}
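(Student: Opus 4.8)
The plan is to bound the population quantity $P(f^*,\tau)$ by introducing a population-level threshold, converting the $0$--$1$ probability into an expected surrogate loss on the positive side, and then controlling the two independent sources of randomness separately: uniform convergence over the $m$ positive samples, and order-statistic concentration over the $n$ negative samples. The guiding observation is that $L_{\bar k}$ thresholds the positive scores against the empirical top-$k$ average $\hat b := \frac{1}{k}\sum_{j=1}^k f^*(\bm{x}_{[j]}^-)$, and it suffices to show that this empirical threshold dominates the relevant population threshold with high probability.

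First I would introduce the population $(1-\tau)$-quantile of the negative scores, $b^*=b^*(f^*)$, characterized by $\mathbb{P}_{\bm{x}^-\sim\mathbb{P}^-}(f^*(\bm{x}^-)\geq b^*)=\tau$. By the definition of $h$, the event $h(\bm{x}^+,f^*)\geq\tau$ is exactly $f^*(\bm{x}^+)\leq b^*$, so $P(f^*,\tau)=\mathbb{P}_{\bm{x}^+}(f^*(\bm{x}^+)\leq b^*)$. Since $l$ is a surrogate of $\mathbb{I}(u\leq 0)$ (Definition~\ref{def:surr}), property~(i) gives $\mathbb{I}(f^*(\bm{x}^+)-b^*\leq 0)\leq l(f^*(\bm{x}^+)-b^*)$, and therefore $P(f^*,\tau)\leq\mathbb{E}_{\bm{x}^+}[l(f^*(\bm{x}^+)-b^*)]$. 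The next step is to replace this population expectation by its empirical average: because the regularized optimum $\bm{w}^*$ has bounded norm (this is what ``proper $R$'' buys) and $\|\bm{x}\|\leq 1$, the induced linear class has Rademacher complexity $O(1/\sqrt{m})$; composing with the Lipschitz loss $l$ and applying symmetrization together with a bounded-differences concentration yields, with probability at least $1-e^{-s}$ and uniformly over the class and over the bias term $b$, the inequality $\mathbb{E}_{\bm{x}^+}[l(f^*(\bm{x}^+)-b)]\leq\frac{1}{m}\sum_{i=1}^m l(f^*(\bm{x}_i^+)-b)+O\big(\sqrt{(s+\log m)/m}\big)$. Here the threshold is absorbed as an additional bias coordinate, and the $\log m$ factor arises from covering this one-dimensional threshold at a resolution tied to $m$.

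Then I would handle the negative side, whose goal is to certify $b^*\leq\hat b$ with high probability. Since the top-$k$ average dominates the $k$-th order statistic, $\hat b\geq f^*(\bm{x}_{[k]}^-)$, so it is enough to show that the empirical $(1-k/n)$-quantile $f^*(\bm{x}_{[k]}^-)$ exceeds the population $(1-\tau)$-quantile $b^*$. Applying a one-sided Dvoretzky--Kiefer--Wolfowitz-type bound to the empirical CDF of the negative scores—made uniform over a cover of the hypothesis class so that it applies to the data-dependent $f^*$—shows that the empirical quantile overshoots the population quantile precisely when $\tau\geq k/n+O(\sqrt{\log m/n})$, which is the stated relation defining $\tau$. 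Combining the two pieces, monotonicity of $l$ and $\hat b\geq b^*$ give $\frac{1}{m}\sum_{i=1}^m l(f^*(\bm{x}_i^+)-b^*)\leq\frac{1}{m}\sum_{i=1}^m l(f^*(\bm{x}_i^+)-\hat b)=L_{\bar k}$; a union bound over the two high-probability events produces the failure probability $2e^{-s}$ and the claimed inequality.

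The hard part will be the data-dependence of both $f^*$ and the threshold $b^*$: neither concentration statement can be applied to a fixed function, so both the positive-side Rademacher bound and the negative-side quantile concentration must hold uniformly over the hypothesis class, and $b^*$ itself depends jointly on $f^*$ and $\mathbb{P}^-$. The delicate point is to route this uniformity through a single covering argument so that the same $\log m$ term governs both sides and the complexity penalty collapses to the clean $O(\sqrt{(s+\log m)/m})$ rate rather than acquiring an extra dimension-dependent or $\log n$ factor; the assumptions $m\geq 12$ and $n\gg s$ are exactly what keep the covering/DKW remainders in this regime.
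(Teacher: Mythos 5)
Your proposal is sound in its overall architecture, but it follows a genuinely different route from the paper. The paper never introduces a population quantile: it works throughout with the expected pairwise surrogate loss $h_l(\bm{x},\bm{w})=\mathbb{E}_{\bm{x}^-}[l(\bm{w}^{\top}\bm{x}-\bm{w}^{\top}\bm{x}^-)]$ and its empirical counterpart $\frac{1}{n}\sum_j l(\bm{w}^{\top}\bm{x}_i^+-\bm{w}^{\top}\bm{x}_j^-)$. The key lemma splits the positives into those that beat the empirical top-$k$ average by margin $1$ (set $A(\bm{w})$) and the rest (set $B(\bm{w})$, whose cardinality is at most $mL_{\bar k}$); for points in $A(\bm{w})$ at most $k-1$ negatives outrank them, so their empirical pairwise loss is $O(G\rho k/n)$, and a uniform Talagrand/Bousquet concentration over $W$ lifts this to $h_l(\bm{x}_i^+,\bm{w})\leq\delta$ with $\delta\approx 2G\rho k/n+O(\sqrt{(s+\log m)/n})$. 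A covering-number argument over $W$ then transfers the empirical frequency of $\{h_l\geq\delta\}$ to $P_l(\bm{w},\delta)$, and a final $\mathbb{I}(u\leq 0)\leq 2l(u')$ trick (valid for $m\geq 12$) converts back to the $0$--$1$ quantity $P$. Your decomposition instead routes everything through the population $(1-\tau)$-quantile $b^*$: a one-sided order-statistic/DKW bound certifies $f^*(\bm{x}_{[k]}^-)\geq b^*$, monotonicity of $l$ then pins the empirical surrogate loss at threshold $b^*$ below $L_{\bar k}$, and a standard uniform-convergence step over $(\bm{w},b)$ handles the positives. Both arguments are valid; yours is more elementary (binomial tail for a quantile overshoot instead of Talagrand for the pairwise loss) and, pleasingly, delivers the leading term $k/n$ in $\tau$ without the Lipschitz-constant factor $2G\rho$ that the paper's $\delta$ actually carries and that the theorem statement quietly absorbs into the $O(\cdot)$. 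The paper's route, in exchange, never has to define quantiles of a possibly atomic score distribution.

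Two small points you should tighten if you write this out. First, when $f^*(\bm{x}^-)$ has atoms the equation $\mathbb{P}(f^*(\bm{x}^-)\geq b^*)=\tau$ may have no solution; take $b^*=\sup\{t:\mathbb{P}(f^*(\bm{x}^-)\geq t)\geq\tau\}$, for which $\{h(\bm{x}^+,f^*)\geq\tau\}=\{f^*(\bm{x}^+)\leq b^*\}$ still holds and $\mathbb{P}(f^*(\bm{x}^-)\geq b^*)\geq\tau$, which is exactly the inequality your binomial argument needs. Second, as you anticipate, both concentration steps must be uniform over the hypothesis class because $f^*$ and hence $b^*$ are data-dependent; a single $\epsilon$-net of $W$ at scale $1/\sqrt{m}$ (as in the paper) suffices for both, and is where the $\log m$ really comes from --- it is a $d\log m$ covering term over the weight ball, not a cover of the one-dimensional threshold.
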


Theorem \ref{bound p} implies that if $L_{\bar{k}}$ is upper bounded by $O(\log(m)/m))$, the probability of ranking any positive samples below $\tau$ percent of negative samples is also bounded by $O(\log(m)/m))$. If $m$ is approaching infinity, $P(f^*, \tau)$ would be close to 0, which means in that case, we can almost ensure that by thresholding at a suitable point, the true-positive rate will get close to 1. Moreover, we observe that $m$ and $n$ play different roles in this bound. For instance, it is well known that the largest absolute value of Gaussian random instances grows in $\log(n)$. Thus we believe that the growth of $n$ only slightly affects both the largest and the centroid of top-proportion scores of negatives samples. This leads to a conclusion that increasing $n$ only slightly raise $L_{\bar{k}}$, but significant reduce the margin between target $\tau$ and $k/n$. On the other hand, increasing $m$ will reduce upper bound of $P$, thus increasing the chance of finding positive instances at the top. In sum, $n$ and $m$ control $\tau$ and $P$ respectively.\\

\section{Experiment Results}\label{sec:experiment}
\subsection{Effectiveness of the Linear-time Projection}
We first demonstrate the effectiveness of our projection algorithm. Following the settings of \citep{liu2009efficient}, we randomly sample 1000 samples from the normal distribution ${\cal N}(0,1)$ and solve the projection problem. The comparing method is \emph{ibis} \citep{liu2009efficient}, an improved bisection algorithm which also makes use of the convexity and monotonicity. All experiments are running on an Intel Core i5 Processor. As shown in Fig.\ref{Fig.lable}, thanks to the efficient reduction of the constant factor, our method outperforms \emph{ibis} by saving almost $75\%$ of the running time in the limit case.

We also solve the projection problem proposed in \citep{lapin2015top} by using a simplified version of our method, and compare it with the method presented in \citep{lapin2015top} (PTkC), whose complexity is $O(n\log(n) + kn)$. As one can observe from Fig.\ref{Fig.sub.2}, our method is linear in complexity regarding with $n$ and does not suffer from the growth of $k$. In the limit case (both large $k$ and $n$), it is more than three-order of magnitude faster than the competitors.

\begin{figure}[tb!]
	\centering
	\subfigure[Run time against the method ibis (log-log).]{
		\label{Fig.sub.1}
		\includegraphics[width=0.45\linewidth]{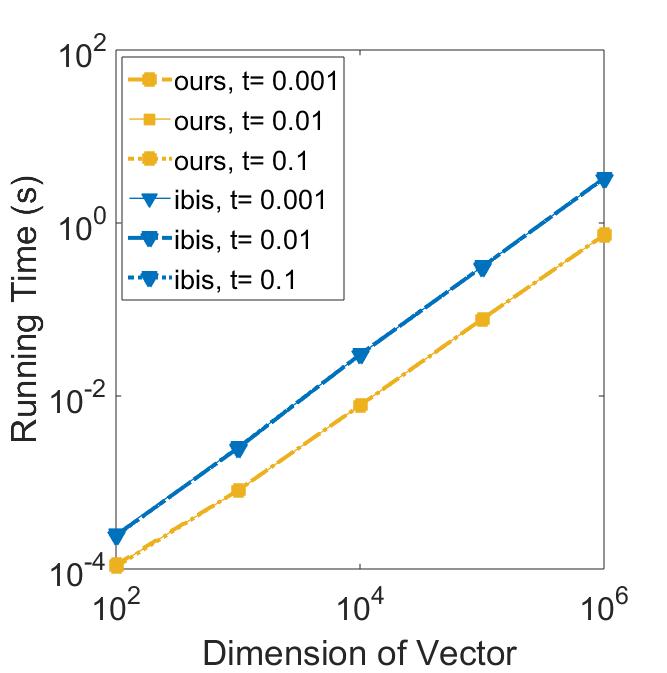}}
	\subfigure[Run time against PTkC (log-log).]{
		\label{Fig.sub.2}
		\includegraphics[width=0.45\linewidth]{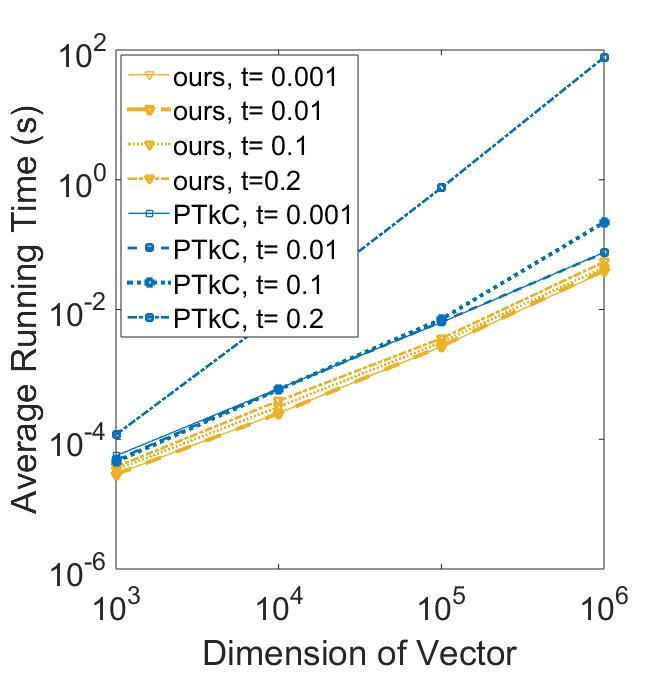}}
	\caption{Running time against two peer projection methods.}
	\label{Fig.lable}
\end{figure}

\subsection{Ranking Performance}\label{RP}

\begin{table*}[htb!]\small\centering\resizebox{1\textwidth}{!}{\begin{tabular}{c|cc|ccccc|ccccc|cccccc}\hline\multirow{2}{*}{\textbf{Dataset}}  &  \multicolumn{2}{c|}{\textbf{heart}}  &  \multicolumn{5}{c|}{\textbf{spambase}}  &  \multicolumn{5}{c|}{\textbf{real-sim}}  &  \multicolumn{6}{c}{\textbf{w8a}}  \\  &  \multicolumn{2}{c|}{120/150,d:13}  &  \multicolumn{5}{c|}{1813/2788,d:57}  &  \multicolumn{5}{c|}{22238/50071,d:20958}  &  \multicolumn{6}{c}{2933/62767,d:300}\\ \hline$\tau(\%)$ & 5 & 10 & 0.1 & 0.5 & 1 & 5 & 10 & 0.01 & 0.1 & 1 & 5 & 10 & 0.05 & 0.1 & 0.5 & 1 & 5 & 10 \\\hline\hline\textbf{CS-SVM}& .526 & .691 & .109 & .302 & $\bm{.487}$ & .811 & .920 & .376 & $\bm{.748}$ & .921 & .972 & .990 & .501 & .520 & .649 & .695 & .828 & .885 \\\textbf{TopPush}& $\bm{.541}$ & .711 & .112 & .303 & .484 & .774 & .845 & $\bm{.391}$ & .747 & .920 & .968 & .983 & $\bm{.508}$ & $\bm{.551}$ & .627 & .656 & .761 & .842 \\\textbf{$\text{SVM}_{\text{tight}}^{\text{pAUC}}$}& .509 & .728 & N/A & N/A & N/A & N/A & N/A & N/A & N/A & N/A & N/A & N/A & N/A & N/A & N/A & N/A & N/A & N/A \\\textbf{$\tau$-Rank}&$\bm{.541}$&$\bm{  .740}$&$\bm{  .112}$&$\bm{  .305}$&$  .460$&$\bm{  .842}$&$\bm{  .929}$&$\bm{  .391}$&$.747$&$ .920$&$\bm{  .975}$&$\bm{  .991}$&$\bm{ .508 }$&$\bm{ .551 }$&$.645$&$\bm{ .710 }$&$\bm{ .832 }$&$\bm{ .894}$\\\textbf{2$\tau$-Rank}&$\bm{.547}$&$\bm{  .734}$&$\bm{  .112}$&$\bm{  .311}$&$  .477$&$\bm{  .862}$&$\bm{  .936}$&$\bm{  .391}$&$.747$&$\bm{  .922}$&$\bm{  .978}$&$\bm{  .992}$&$\bm{ .508 }$&$.549$&$\bm{ .675 }$&$\bm{ .739 }$&$\bm{ .841 }$&$\bm{ .902}$\\\hline\end{tabular}}\caption{Ranking performance by different values of the tolerance $\tau$. The number of positive/negative instances and feature dimensions (`d') is shown together with the name of each dataset. The best results are shown in bold. 'N/A's denote the experiments that require more than one week for training.}\label{tab:real_exp_rank}\end{table*}

Next, we validate the ranking performance of our $\tau$-FPL method, i.e. scoring and sorting test samples, and then evaluate the proportion of positive samples ranked above $1- \tau$ proportion of negative samples. Considering ranking performance independently can avoid the practical problem of mismatching the constraint in (\ref{empirical plm}) on testing set, and always offer us the optimal threshold.

Specifically, we choose (\ref{origin rank plm}) as evaluation and validation criterion. Compared methods include cost-sensitive SVM (CS-SVM) \citep{osuna1997support}, which has been shown a lower bound approximation of (\ref{origin rank plm}); TopPush \citep{li2014top} ranking, which focus on optimizing the absolute top of the ranking list, also a special case of our model ($\tau = 0$); $\text{SVM}_{\text{tight}}^{\text{pAUC}}$ \citep{Narasimhan2013ASS}, a more general method which designed for optimizing arbitrary partial-AUC. We test two version of our algorithms: $\tau$-Rank and $2\tau$-Rank, which correspond to the different choice of $\tau$ in learning scheme. Intuitively, enlarge $\tau$ in training phase can be seen as a top-down approximation---from upper bound to the original objective (\ref{empirical plm}). On the other hand,  the reason for choosing $2\tau$ is that, roughly speaking, the average score of the top $2\tau$ proportion of negative samples may close to the score of $\tau n$-th largest negative sample.

\textbf{Settings}. We evaluate the performance on publicly benchmark datasets with different domains and various sizes\footnote{\url{https://www.csie.ntu.edu.tw/~cjlin/libsvmtools/datasets/binary}}. For small scale datasets($\leq 10,000$ instances), 30 times stratified hold-out tests are carried out, with $2/3$ data as train set and $1/3$ data as test set. For large datasets, we instead run 10 rounds. In each round, hyper-parameters are chosen by 5-fold cross validation from grid, and the search scope is extended if the optimal is at the boundary.

\textbf{Results}. Table \ref{tab:real_exp_rank} reports the experimental results. We note that at most cases, our proposed method outperforms other peer methods. It confirms the theoretical analysis that our methods can extract the capacity of the model better. For TopPush, it is highly-competitive in the case of extremely small $\tau$, but gradually lose its advantage as $\tau$ increase. The algorithm of $\text{SVM}_{\text{tight}}^{\text{pAUC}}$ is based on cutting-plane methods with exponential number of constraints, similar technologies are also used in many other ranking or structured prediction methods, e.g. Structured SVM \citep{tsochantaridis2005large}. The time complexity of this kind of methods is $O((m+n)\log(m+n))$, and we found that even for thousands of training samples, it is hard to finish experiments in allowed time. 

\subsection{Overall Classification Accuracy}
\begin{table*}[htb!]\small
	\centering
	\resizebox{1\textwidth}{!}
	{
		\begin{tabular}{c| c| cccc |c ccc}
			\hline
			\textbf{Dataset(+/-)} &  \textbf{$\tau$(\%)} & \textbf{BS-SVM} &  \textbf{CS-LR} &  \textbf{CS-SVM} &  \textbf{CS-SVM-OOB} &  \textbf{$\tau$-FPL} &  \textbf{$2\tau$-FPL}\\
			\hline
			\hline
			\textbf{heart}&5&$(.069, .675), .713$&$(.035, .394), .606$&$(.027, .327), .673$&$(.058, .553), .609$&$(.053, .582), \bm{.468}$&$(.055, .584), \bm{.514}$\\
			120/150,d:13&10&$(.121, .774), .435$&$(.058, .615), .385$&$(.078, .666), .334$&$(.088, .682), .318$&$(.086, .686), \bm{.314}$&$(.080, .679), \bm{.317}$\\
			\hline
			
			\textbf{breast-cancer}&1&$(.015, .964), .559$&$(.007, .884), \bm{.116}$&$(.006, .870), .130$&$(.014, .955), .451$&$(.013, .955), .324$&$(.011, .949), .192$\\
			239/444&5&$(.063, .978), .276$&$(.013, .965), .035$&$(.017, .965), .034$&$(.046, .974), \bm{.026}$&$(.041, .976), \bm{.025}$&$(.045, .974), \bm{.026}$\\
			d:10&10&$(.113, .985), .142$&$(.035, .970), .030$&$(.044, .973), .027$&$(.095, .981), .020$&$(.098, .982), \bm{.018}$&$(.094, .982), \bm{.018}$\\
			\hline
			
			
			\textbf{spambase}&0.5&$(.008, .426), 1.220$&$(.007, .011), 1.362$&$(.002, .109), .891$&$(.005, .275), .790$&$(.005, .278), \bm{.722}$&$(.004, .268), \bm{.732}$\\
			1813/2788&1&$(.013, .583), .748$&$(.007, .011), .989$&$(.004, .256), .744$&$(.009, .418), .582$&$(.008, .416), .584$&$(.008, .440), \bm{.560}$\\
			d:57&5&$(.054, .895), .192$&$(.007, .011), .989$&$(.020, .667), .333$&$(.047, .793), .207$&$(.041, .822), \bm{.178}$&$(.046, .845), \bm{.155}$\\
			&10&$(.103, .941), .087$&$(.007, .011), .989$&$(.051, .716), .284$&$(.090, .902), .099$&$(.087, .925), \bm{.075}$&$(.090, .928), \bm{.072}$\\
			\hline
			
			\textbf{real-sim}&0.01&$(.002, .813), 22.376$&$(.001, .207), 7.939$&$(.000, .209), .791$&$(.000, .268), .833$&$(.000, .270), \bm{.730}$&$(.000, .270), \bm{.730}$\\
			22238/50071&0.1&$(.008, .919), 7.09$&$(.001, .207), .826$&$(.001, .700), .428$&$(.001, .584), .416$&$(.001, .585), \bm{.415}$&$(.001, .585), \bm{.415}$\\
			d:20958&0.5&$(.023, .966), 3.680$&$(.001, .207), .794$&$(.001, .755), .245$&$(.003, .810), .190$&$(.003, .829), \bm{.174}$&$(.003, .827), \bm{.181}$\\
			&1&$(.036, .978), 2.570$&$(.001, .207), .794$&$(.007, .880), .121$&$(.007, .875), .125$&$(.007, .894), \bm{.115}$&$(.006, .891), \bm{.109}$\\
			&5&$(.094, .994), .878$&$(.078, .994), .575$&$(.029, .931), .139$&$(.039, .965), .035$&$(.041, .972), \bm{.028}$&$(.044, .974), \bm{.028}$\\
			&10&$(.133, 0.997), .336$&$(.078, .994), \bm{.007}$&$(.069, .993), .007$&$(.099, .986), .019$&$(.092, .991), .009$&$(.094, .991), .009$\\
			\hline
			\textbf{w8a}&0.05&$(.001, .525), .966$&$(.000, .101), .900$&$(.000, .420), .580$&$(.000, .438), \bm{.562}$&$(.000, .428), .572$&$(.000, .428), .572$\\
			1933/62767&0.1&$(.001, .585), .710$&$(.000, .119), .881$&$(.000, .447), .553$&$(.001, .493), .507$&$(.001, .495), \bm{.505}$&$(.001, .499), \bm{.501}$\\
			d:123&0.5&$(.006, .710), .437$&$(.000, .119), .881$&$(.002, .595), .405$&$(.003, .634), .366$&$(.003, .654), \bm{.347}$&$(.003, .667), \bm{.333}$\\
			&1&$(.011, .749), .341$&$(.014, .696), .715$&$(.006, .642), .358$&$(.006, .695), .305$&$(.006, .702), \bm{.298}$&$(.007, .726), \bm{.274}$\\
			&5&$(.048, .823), .177$&$(.014, .696), .305$&$(.013, .701), .299$&$(.046, .805), .195$&$(.033, .818), .182$&$(.036, .827), \bm{.173}$\\
			&10&$(.049, .823), .177$&$(.014, .696), .305$&$(.013, .701), .299$&$(.053, .814), .186$&$(.042, .833), \bm{.167}$&$(.038, .826), \bm{.174}$\\\hline
	\end{tabular}}
	\caption{[(mean false positive rate, mean true positive rate), NP-score] on real-world datasets by different values of the tolerance $\tau$. In the leftmost column, the number of positive/negative instances and feature dimensions (`d') in each dataset. For each dataset, the best results are shown in bold.}
	\label{tab:real_exp}
\end{table*}

In this section we compare the performance of different models by jointly learning the scoring function and threshold in training phase, i.e. output a classifier. To evaluate a classifier under the maximum tolerance, we use \emph{Neyman-Pearson score} (NP-score) \citep{scott2007performance}. The NP-score is defined by
$\frac{1}{\tau} \max\{fpr, \tau\} - tpr$ where $fpr$ and $tpr$ are false-positive rate and true-positive rate of the classifier respectively, and $\tau$ is the maximum tolerance. This measure punishes classifiers whose false-positive rates exceed $\tau$, and the punishment becomes higher as $\tau \to 0$.

\textbf{Settings}. We use the similar setting for classification as for ranking experiments, i.e., for small scale datasets, 30 times stratified hold-out tests are carried out; for large datasets, we instead run 10 rounds. Comparison baselines include: Cost-Sensitive Logistic Regression (CS-LR) which choose a surrogate function that different from  CS-SVM; Bias-Shifting Support Vector Machine (BS-SVM), which first training a standard SVM and then tuning threshold to meet specified false-positive rate; cost-sensitive SVM (CS-SVM). For complete comparison, we also construct a CS-SVM by our out-of-bootstrap thresholding (CS-SVM-OOB), to eliminate possible performance gains comes from different thresholding method, and focus on the training algorithm itself. For all of comparing methods, the hyper-parameters are selected by 5-fold cross-validation with grid search, aims at minimizing the NP-score, and the search scope is extended when the optimal value is at the boundary.  For our $\tau$-FPL, in the ranking stage the regularization parameter $R$ is selected to minimize (\ref{origin rank plm}) , and then the threshold is chosen to minimize NP-score. We test two variants of our algorithms: $\tau$-FPL and $2\tau$-FPL, which corresponding different choice of $\tau$ in learning scheme. As mentioned previously, enlarge $\tau$ can be seen as a top-down approximation towards the original objective.

\textbf{Results}. The NP-score results are given in Table \ref{tab:real_exp}. First, we note that both our methods can achieve the best performance in most of the tests, compared to various comparing methods. Moreover, it is clear that even using the same method to select the threshold, the performance of cost sensitive method is still limited. Another observation is that both of the three algorithms which using out-of-bootstrap thresholding can efficiently control the false positive rate under the constraint. Moreover, $\tau$-FPLs are more stable than other algorithms, which we believe benefits from the accurate splitting of the positive-negative instances and stable thresholding techniques.

\subsection{Scalability}

\begin{figure}[tb]
	\centering
	\includegraphics[width=0.8\linewidth]{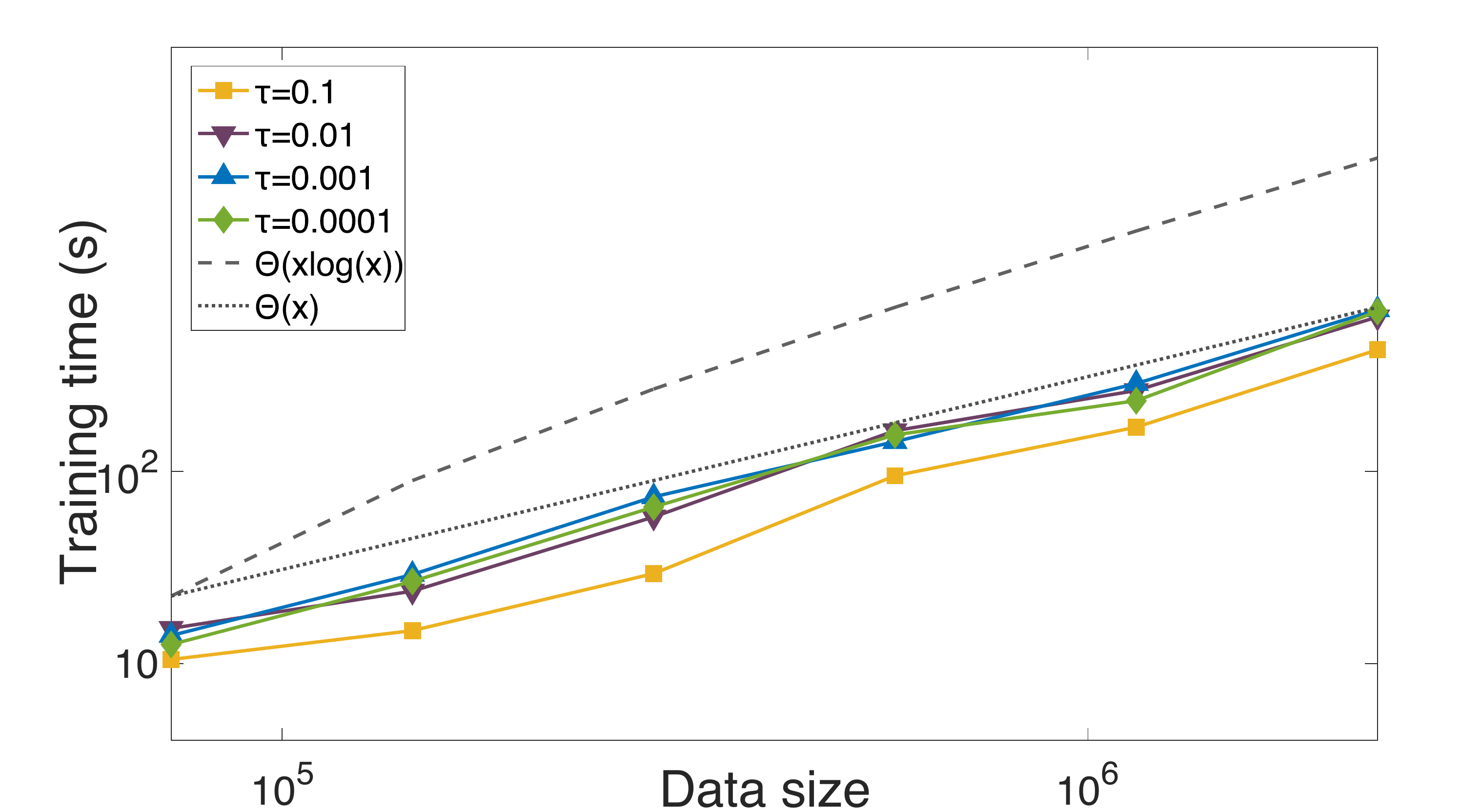}
	\caption{Training time of $\tau$-FPL versus training data size for different $\tau$ (log-log).}
	\label{Fig.scale}
\end{figure}
We study how $\tau$-FPL scales to a different number of training examples by using the largest dataset real-sim. In order to simulate the limit situation, we construct six datasets with different data size, by up-sampling original dataset. The sampling ratio is $\{1,2,2^2,...2^5\}$, thus results in six datasets with data size from 72309 to 2313888. We running $\tau$-FPL ranking algorithm on these datasets with different $\tau$ and optimal $R$ (chosen by cross-validation), and report corresponding training time. Up-sampling technology ensures that, for a fixed $\tau$, all the six datasets share the same optimal regularization parameter $R$. Thus the unique variable can be fixed as data size. Figure \ref{Fig.scale} shows the log-log plot for the training time of $\tau$-FPL versus the size of training data, where different lines correspond to different $\tau$. It is clear that the training time of $\tau$-FPL is indeed linear dependent in the number of training data. This is consistent with our theoretical analysis and also demonstrate the scalability of $\tau$-FPL.

\section{Proofs and Technical Details}

In this section, we give all the detailed proofs missing from the main text, along with ancillary remarks and comments.

\textbf{Notation.} In the following, we define $z_{[i]}$ be the $i$-th largest dimension of a vector $\bm{z} = (z_1,...,z_N) \in \mathbb{R}^N$, define $\alpha_{[0]}^0 = \beta_{[0]}^0 = \infty$, $\alpha_{[n+1]}^0 = \beta_{[n+1]}^0 =-\infty$, define $B_i^j$ as the range $(\beta_{[j]}^0, \beta_{[i]}^0]$ and $B_i$ as the abbreviation of $B_i^{i+1}$.

\subsection{Proof of Theorem \ref{np-hard}: NP-Hardness of $\tau$-$OPT_{L}^{\lambda}$}
Our proof is based on a turing-reduction of the following Maximum Agreement Problem (MAP) to $\tau$-$OPT_{L}^{\lambda}$.
\begin{myplm}
	Define $\mathcal{H'} = \{f(\bm{x}) = \bm{w}^{\top}\bm{x} - b \mid \bm{w} \in \mathbb{R}^p, b \in R\}$. We say $f \in \mathcal{H'}$ and point $(\bm{x}, y) \in R^p \times \{+1, -1\}$ reach an \emph{agreement} if $yf(\bm{x}) > 0$. Given data set $D = \{(\bm{x}_1, y_1),...(\bm{x}_N, y_N) \}$ contains $N$ samples, find a $f \in \mathcal{H'}$ with the maximum number of agreements reached on $D$.
\end{myplm}
It is clear that MAP is in fact equivalent to the problem of binary classification using 0-1 loss and hyperplane hypothesis set. In general, both solving MAP or approximately maximizing agreements to within some constant factor (418/415) are NP-hard \citep{ben2003difficulty}. Now we introduce the decision problem version of MAP (DP-MAP).
\begin{myplm2}
	Given data set $D = \{(\bm{x}_1, y_1),...(\bm{x}_N, y_N) \}$ and any $0 \leq k \leq N$, whether there exists some $f \in \mathcal{H'}$ reaches at least $k$ agreements on $D$ ? If yes, output one of such $f$.
\end{myplm2}
If we have an oracle algorithm $O$ of DP-MAP, we can solve MAP by applying bisection search on $k$ and take the maximum value of $k$ that $O$ output \emph{Yes} as solution. The overall number of calling $O$ is at most $\log(N)$. This naive reduction shows that DP-MAP is also NP-hard. More precisely, it is a NP-Complete problem.

Now we consider to solve DP-MAP by $\tau$-$OPT_{L}^{\lambda}$. This means that, for fixed $0 < \tau < 1$, $\lambda > 0$ and surrogate $L(\cdot)$, we have an oracle algorithm of $\tau$-$OPT_{L}^{\lambda}$, denoted by $O$. We need to design an algorithm so that for all $0 \leq k \leq N$ and data set $D$ with $N$ $p$-dimensional points, we can determine if there exists a hyperplane reaches at least $k$ agreements on $D$ by polynomial calls to $O$. 

\textbf{Case-1: $N - \lfloor \tau N\rfloor < k \leq N$}.~~In this case, we construct the following dataset $D_O$ as input to $O$:
\begin{eqnarray*}
	m &=& 1\\
	n &=& N + A\\
	d &=& p + 1\\
	\bm{x}^+ &=& \bm{0}^{d}\\
	\bm{x}_i^- &=& (-y_i\bm{x}_i^\top, -y_i),~~i = 1,...,N\\
	\bm{x}_{N + j}^- &=& \bm{0}^d,~~j= 1,...,A
\end{eqnarray*}
Here, $A \in \mathbb{N}$ is the smallest non-negative integer that satisfies
\begin{equation}\label{g(A)}
g(A) \triangleq \lfloor\tau(N + A)\rfloor - A = N - k
\end{equation}
We give some properties about $g(A)$.
\begin{lemma}
	(Properties of $g: \mathbb{N} \rightarrow \mathbb{Z}$)
	\begin{enumerate}[itemindent=1em]
		\item $0 \leq g(A) - g(A + 1) \leq 1$; 
		\item $g(A) \leq 0$ when $A \geq \frac{\tau}{1 - \tau}N$;
		\item For any integer $T \in [0, g(0)]$, there exist $A = \mathcal{O}(N)$ such that $g(A) = T$.
	\end{enumerate}	
\end{lemma}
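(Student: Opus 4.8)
The plan is to treat $g$ as an integer-valued function on $\mathbb{N}$ and exploit the near-monotonicity forced by the floor operator; all three claims then reduce to elementary estimates on $\lfloor \tau(N+A)\rfloor$. I would establish the one-step decrement bound (item 1) first, since items 2 and 3 build on it, and I would handle item 2 by a direct upper estimate, leaving the (mildly) harder item 3 to be assembled from the first two via a discrete intermediate-value argument.

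For item 1, I would compute the difference directly. Writing $\lfloor \tau(N+A+1)\rfloor = \lfloor \tau(N+A) + \tau\rfloor$, one gets
\[
g(A) - g(A+1) = \lfloor \tau(N+A)\rfloor - \lfloor \tau(N+A) + \tau\rfloor + 1.
\]
Because $0 < \tau < 1$, adding $\tau$ to a real number raises its floor by either $0$ or $1$, so the bracketed difference lies in $\{-1, 0\}$ and hence $g(A) - g(A+1) \in \{0, 1\}$, which is exactly the assertion. For item 2 I would simply drop the floor from above: $g(A) \leq \tau(N+A) - A = \tau N - (1-\tau)A$; substituting the hypothesis $A \geq \frac{\tau}{1-\tau}N$ gives $(1-\tau)A \geq \tau N$, whence $g(A) \leq 0$.

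The hard part will be item 3, though ``hard'' here means only that it requires combining the previous two items with a discrete intermediate-value step rather than a one-line estimate. I would observe that $g(0) = \lfloor \tau N\rfloor \geq 0$ and that, by item 2, $g$ is already nonpositive once $A$ exceeds $\frac{\tau}{1-\tau}N = \mathcal{O}(N)$. Since $g$ is integer-valued and, by item 1, never drops by more than one as $A$ increments, the sequence $g(0), g(1), \dots$ cannot skip any integer while descending from $g(0)$ to a nonpositive value. Consequently every integer $T$ with $0 \leq T \leq g(0)$ is attained, and the \emph{first} index at which a value $T \geq 0$ is reached occurs no later than the first index where $g$ becomes nonpositive, i.e. at $A \leq \lceil \frac{\tau}{1-\tau}N\rceil = \mathcal{O}(N)$. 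This delivers the required $A = \mathcal{O}(N)$ with $g(A) = T$, completing the lemma. The only subtlety to state carefully is that the ``no-skip'' property is precisely the discrete intermediate-value consequence of item 1, and that the $\mathcal{O}(N)$ bound on $A$ comes from item 2 bracketing where the descent terminates.
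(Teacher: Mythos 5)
Your proof is correct. The paper itself omits the argument for this lemma (it states only that the properties ``are easy to verify''), so there is no authorial proof to compare against; your treatment --- the floor-increment computation for item 1, dropping the floor for item 2, and the discrete intermediate-value argument combining the two for item 3 --- is exactly the natural way to fill that gap, and each step checks out.
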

Both of them are easy to verify, so we omit the details. Combine these properties and the fact $ 0 \leq N - k < g(0)$, we know that there must exist some $A \in [0, \lceil\frac{\tau}{1 - \tau}N\rceil]$ satisfies \eqref{g(A)}, and thus the size of dataset we constructed above is linearly related to $N$ and $p$. Now we introduce the following lemma.
\begin{lemma}\label{keq}
	There exists hyperplane reaches at least $k$ agreements iff the minimum value that $O$ found is less than $L(0)$.
\end{lemma}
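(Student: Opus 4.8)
The plan is to translate the objective of $\tau$-$OPT_{L}^{\lambda}$ evaluated on the constructed data set $D_O$ into the language of agreements, and then read off the claimed equivalence from the defining properties of the surrogate $L(\cdot)$. Since $m = 1$ and $\bm{x}^+ = \bm{0}^{d}$, for any weight vector $\bm{w} = (\bm{w}', w_0) \in \mathbb{R}^{p+1}$ we have $f(\bm{x}^+) = 0$, so the objective collapses to $L\bigl(-f(\bm{x}_{[q]}^-)\bigr)$ with $q = \lfloor \tau n\rfloor + 1$ and $n = N + A$. By Definition \ref{def:surr}, $L(u) < L(0)$ holds exactly when $u > 0$; hence for a fixed $\bm{w}$ the objective is strictly below $L(0)$ iff $f(\bm{x}_{[q]}^-) < 0$, and therefore the minimum value found by $O$ is $< L(0)$ iff some feasible $\bm{w}$ makes the $q$-th largest negative score strictly negative.

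Next I would set up the dictionary between the $\tau$-$OPT$ weight $\bm{w} = (\bm{w}', w_0)$ and the MAP hyperplane $g(\bm{x}) = \bm{w}'^{\top}\bm{x} - b$ via $b = -w_0$. A direct computation gives $f(\bm{x}_i^-) = \bm{w}^{\top}(-y_i\bm{x}_i^{\top}, -y_i) = -y_i\,g(\bm{x}_i)$ for $i \leq N$, so the condition $f(\bm{x}_i^-) < 0$ is precisely the agreement condition $y_i g(\bm{x}_i) > 0$; meanwhile the $A$ padding points satisfy $f(\bm{x}_{N+j}^-) = 0$. The norm constraint $\|\bm{w}\| \leq \lambda$ is harmless because agreements are invariant under positive rescaling of $(\bm{w}', w_0)$: any hyperplane reaching $a$ agreements can be scaled down to satisfy $\|\bm{w}\| \leq \lambda$ without changing the sign of any $f(\bm{x}_i^-)$, so the feasible set of $\tau$-$OPT_{L}^{\lambda}$ realizes every achievable agreement count.

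The counting step is the crux. Writing $a$ for the number of agreements of $g$, the number of negatives with $f(\bm{x}_j^-) \geq 0$ equals $(N - a) + A$, consisting of the $N - a$ non-agreeing original points together with all $A$ padding points at the origin. Because the scores are sorted in decreasing order, $f(\bm{x}_{[q]}^-) < 0$ holds iff at most $q - 1$ of the scores are nonnegative, i.e. $(N - a) + A \leq q - 1 = \lfloor \tau(N+A)\rfloor$. Invoking the defining relation \eqref{g(A)} for $A$, namely $\lfloor \tau(N+A)\rfloor - A = N - k$, this inequality simplifies to $N - a \leq N - k$, that is $a \geq k$. Chaining the three equivalences --- minimum $< L(0)$ $\Leftrightarrow$ some feasible $\bm{w}$ gives $f(\bm{x}_{[q]}^-) < 0$ $\Leftrightarrow$ some hyperplane reaches at least $k$ agreements --- yields the lemma.

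The step I expect to require the most care is the sorted-order counting argument together with the tie handling at $f = 0$: I must ensure the padding points are lumped into the nonnegative count, that the equivalence ``$q$-th largest $< 0$'' $\Leftrightarrow$ ``at most $q-1$ nonnegative scores'' is applied with the correct non-strict inequality, and that the strictness in Definition \ref{def:surr} is used so that the boundary case $f(\bm{x}_{[q]}^-) = 0$ lands on the $L(0)$ side. Getting these inequalities exactly right is what makes the reduction tight rather than off by one.
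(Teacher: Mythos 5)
Your proof is correct and follows essentially the same route as the paper's: translate the single-positive objective into the sign of the $q$-th largest negative score, identify non-agreements with nonnegative scores (padding points included), and cancel $A$ via the defining relation $\lfloor\tau(N+A)\rfloor - A = N-k$ to get $a \geq k$. The only difference is presentational --- you spell out both directions as a chain of equivalences and handle the ties at zero explicitly, where the paper details the forward direction and declares the converse ``easily reversible.''
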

\begin{proof}
	On the one hand, if there exists a hyperplane $f_0(\bm{x}) = \bm{w}_0^{\top}\bm{x} + b_0 \in \mathcal{H'}$ reaches at least $k$ agreements on $D$, we know $|\mathcal{T}| \leq N - k$ where $\mathcal{T} = \{t \mid (\bm{w}_0^{\top}, b_0)(y_t\bm{x_t}^{\top},  y_t)^{\top} \leq 0\}$. Define $\bm{w_1} = \frac{\lambda(\bm{w}_0^{\top}, b_0)^{\top}}{||(\bm{w}_0^{\top}, b_0)^{\top}||}$.  Now $\bm{w}_1 \in \mathcal{H}^d$ and at most $N - k$ different values of $t \in \{1,...,N\}$ satisfies 
	\begin{equation}\label{kkk}
	\bm{w}_1^T\bm{x}^+ - \bm{w}_1^T\bm{x}_t^- \leq 0.
	\end{equation}
	Note that for any $j = N + 1,...,N + A$ we have $\bm{w}_1^T\bm{x}^+ - \bm{w}_1^T\bm{x}_j^- = 0$, combine these two observations we have: at most $N - k + A$ different values $t \in \{1,...,N+A\}$ satisfies \eqref{kkk}. Thus,
	\begin{eqnarray*}
		&& \bm{w}_1^T\bm{x}^+ - \bm{w}_1^T\bm{x}_{[N- k + A + 1]}^- > 0\\
		\Rightarrow  && L(\bm{w}_1^T\bm{x}^+ - \bm{w}_1^T\bm{x}_{[N- k + A + 1]}^-) < L(0)\\
		\Rightarrow  && L(\bm{w}_1^T\bm{x}^+ - \bm{w}_1^T\bm{x}_{[\lfloor\tau(N + A)\rfloor + 1]}^-) < L(0)\\
		\Rightarrow  && \min \limits_{\bm{w} \in \mathcal{H}^d} L(\bm{w}^T\bm{x}^+ - \bm{w}^T\bm{x}_{[\lfloor \tau n\rfloor + 1]}^-) < L(0)
	\end{eqnarray*}
	The LHS of the last inequality is indeed the minimum value that $O$ output.
	
	On the other hand, it is obvious that the above proof is easily reversible, this completes the proof of lemma.
\end{proof}
By Lemma \ref{keq}, we can determine if there exists a hyperplane reaches at least $k$ agreements by calling $O$ once. If the output minimum value is less than $L(0)$, the hyperplane that $O$ learned is exactly corresponds to the hyperplane that reaches enough agreements on $D$, otherwise there is no such hyperplane. We thus complete the reduction.

\textbf{Case-2: $0 \leq k \leq N - \lfloor \tau N\rfloor$}.~The dataset we used here as input to $O$ is
\begin{eqnarray*}
	m &=& 1\\
	n &=& N + A\\
	d &=& p + 2\\
	\bm{x}^+ &=& \bm{0}^{d}\\
	\bm{x}_i^- &=& (-y_i\bm{x}_i^\top, -y_i, 0)^\top,~~i = 1,...,N\\
	\bm{x}_{N + j}^- &=& (\bm{0}^{p+1\top}, -1)^\top,~~j= 1,...,A
\end{eqnarray*}
$A \in \mathbb{N}$ is the smallest non-negative integer that satisfies
\begin{equation}\label{h(A)}
h(A) \triangleq \lfloor\tau(N + A)\rfloor = N - k
\end{equation}
\begin{lemma}\label{keqq}
	(Properties of $h: \mathbb{N} \rightarrow \mathbb{N^+}$)
	\begin{enumerate}[itemindent=1em]
		\item $0 \leq h(A + 1) - h(A) \leq 1$; 
		\item $h(A) > N$ when $A \geq \frac{1 - \tau}{\tau}N + \frac{1}{\tau}$;
		\item For any integer $T \in [\lfloor \tau N\rfloor, N]$, there exist $A = \mathcal{O}(N)$ such that $h(A) = T$.
	\end{enumerate}	
\end{lemma}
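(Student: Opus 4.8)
The plan is to dispatch the three parts in order, using only elementary properties of the floor function and the standing assumption $0 < \tau < 1$. Since $h(A) = \lfloor \tau(N+A)\rfloor$ has exactly the same structure as the function $g$ treated in the previous lemma (whose verification was omitted as routine), I expect no genuine difficulty; the three claims interlock in the same way.

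For part 1, I would write $h(A+1) - h(A) = \lfloor \tau(N+A) + \tau \rfloor - \lfloor \tau(N+A) \rfloor$ and invoke the standard bound that for any real $y$ and any $c$ with $0 \leq c < 1$ one has $\lfloor y \rfloor \leq \lfloor y + c \rfloor \leq \lfloor y + 1 \rfloor = \lfloor y \rfloor + 1$ (the left inequality by monotonicity since $y + c \geq y$, the right since $y + c < y + 1$). Applying this with $y = \tau(N+A)$ and $c = \tau$ gives $0 \leq h(A+1) - h(A) \leq 1$ at once; in particular $h$ is non-decreasing and increases in unit steps.

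For part 2, I would observe that $\lfloor \tau(N+A)\rfloor > N$ is equivalent to $\lfloor \tau(N+A)\rfloor \geq N+1$, which—because $N+1$ is an integer—holds as soon as $\tau(N+A) \geq N+1$. Solving this linear inequality for $A$ yields $A \geq \frac{(1-\tau)N + 1}{\tau} = \frac{1-\tau}{\tau}N + \frac{1}{\tau}$, exactly the stated threshold, so that any $A$ above it forces $h(A) \geq N+1 > N$.

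For part 3, I would combine the first two parts through a discrete intermediate-value argument. Note $h(0) = \lfloor \tau N \rfloor$, and by part 2 there is an explicit $A_0 = \lceil \frac{1-\tau}{\tau}N + \frac{1}{\tau}\rceil = \mathcal{O}(N)$ with $h(A_0) \geq N+1$. Since by part 1 the integer sequence $h(0), h(1), \ldots, h(A_0)$ is non-decreasing and never skips a value, it attains every integer between $\lfloor \tau N \rfloor$ and $h(A_0) \geq N+1$, hence any target $T \in [\lfloor \tau N\rfloor, N]$; the smallest such $A$ satisfies $A \leq A_0 = \mathcal{O}(N)$. The only step deserving a word of care is this no-skip property, which is precisely what part 1 supplies, so there is no real obstacle, and the bound $A = \mathcal{O}(N)$ inherits the linear size of the threshold from part 2.
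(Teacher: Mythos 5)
Your proof is correct. The paper in fact gives no proof of this lemma at all---it is stated and immediately used, with the verification implicitly deferred as routine (just as for the analogous unnumbered lemma on $g(A)$, where the authors write that the properties ``are easy to verify''). Your argument---the floor-function bound $\lfloor y\rfloor \leq \lfloor y+\tau\rfloor \leq \lfloor y\rfloor+1$ for part~1, the observation that $\lfloor \tau(N+A)\rfloor \geq N+1$ once $\tau(N+A)\geq N+1$ for part~2, and the discrete intermediate-value argument combining the two for part~3---is exactly the elementary verification the authors had in mind, and it is complete and sound.
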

Combine Lemma \ref{keqq} and the fact $\lceil \tau N \rceil \leq N - k \leq N$ we know that the size of dataset constructed above is linearly related to $N$ and $p$. Now the claim in Lemma \ref{keq} is also true in this case, we give a proof sketch below.
\begin{proof}
	We follow the same definitions of $f_0$ and $\mathcal{T}$ as in the proof of Case-1. Define $\bm{w_1} = \frac{\lambda(\bm{w}_0^{\top}, b_0, 1)^{\top}}{||(\bm{w}_0^{\top}, b_0, 1)^{\top}||}$.  Now $\bm{w}_1 \in \mathcal{H}^d$ and we have
	\begin{eqnarray*}
		\bm{w}_1^T\bm{x}^+ - \bm{w}_1^T\bm{x}_i^- &=& (\bm{w}_0^{\top}, b_0)(y_i\bm{x_i}^{\top},  y_i)^{\top},~~~\forall i = 1,...,N\\
		\bm{w}_1^T\bm{x}^+ - \bm{w}_1^T\bm{x}_j^- &=& 1,~~~\forall j = N+1,...N+A
	\end{eqnarray*}
	Thus, \eqref{kkk} holds for at most $N - k$ different values of $t$ in $\{1,...,N+A\}$. This implies
	\begin{eqnarray*}
		&&\bm{w}_1^T\bm{x}^+ - \bm{w}_1^T\bm{x}_{[N- k + 1]}^- > 0\\
		\Rightarrow  && L(\bm{w}_1^T\bm{x}^+ - \bm{w}_1^T\bm{x}_{[N- k + 1]}^-) < L(0)\\
		\Rightarrow  && L(\bm{w}_1^T\bm{x}^+ - \bm{w}_1^T\bm{x}_{[\lfloor\tau(N + A)\rfloor + 1]}^-) < L(0)\\
		\Rightarrow  && \min \limits_{\bm{w} \in \mathcal{H}^d} L(\bm{w}^T\bm{x}^+ - \bm{w}^T\bm{x}_{[\lfloor \tau n\rfloor + 1] }^-) < L(0)
	\end{eqnarray*}
	Above proof can be easily reversed for another direction, we omit the details.
\end{proof}
Combine both Case-1 and Case-2, we complete the reduction from DP-MAP to $\tau$-$OPT_{L}^{\lambda}$. Since DP-MAP is NP-Complete, we conclude that $\tau$-$OPT_{L}^{\lambda}$ must be NP-hard.\\
\textbf{Remark 1}.~It is clear that above reduction can not be used for $\tau = 0$. Indeed, for convex surrogate $L(\cdot)$ the objective of $0$-$OPT_{L}^{\lambda}$ is convex and its global minimum can be efficiently obtained, see \citep{li2014top} for more details.\\
\textbf{Remark 2}.~In problem definition and above proof we implicitly suppose that the minimum of $\tau$-$OPT_{L}^{\lambda}$ can be attained. In general cases, by considering the decision problem of $\tau$-$OPT_{L}^{\lambda}$ one can complete a very similar reduction and we omit details for simplicity here. 
\subsection{Proof of Proposition \ref{prop:upper}}
$\bar{R}_0 \leq \bar{R}_1$ can be easily obtained by combining the relationship between mean and maximum value and the definition that $L(\cdot)$ is an upper bound of $\mathbb{I}(\cdot)$. We now prove $\bar{R}_1 \leq \bar{R}_2$. Define $k = \lfloor\tau n \rfloor + 1$, we have
\begin{eqnarray*}
	&&kL(0) \geq k > \tau n \\
	&\geq& \sum_{j=1}^n L(b - f(\bm{x}_j^-))~\text{(Constraint on FPR)}\\
	&\geq& \sum_{j=1}^k L(b - f(\bm{x}_{[j]}^-))~\text{(Nonnegativity of } L\text{)}\\
	&\geq& kL(\frac{1}{k}\sum_{j=1}^k (b - f(\bm{x}_{[j]}^-)))~\text{(Jensen's Inequality)}\\ 
	&=& kL(b - \frac{1}{k}\sum_{j=1}^k f(\bm{x}_{[j]}^-))\\
	&\Leftrightarrow& b > \frac{1}{k}\sum_{j=1}^k f(\bm{x}_{[j]}^-)~\text{(Monotonicity of } L\text{)}
\end{eqnarray*}
Thus 
\begin{eqnarray*}
	\bar{R}_2 & = &\min_{b \in \mathbb{R}} \frac{1}{m} \sum_{i=1}^m L(f(\bm{x}_i^+) - b)\\
	&\geq& \frac{1}{m} \sum \nolimits_{i=1}^m L(f(\bm{x}_i^+) - \frac{1}{\lfloor \tau n \rfloor + 1}\sum_{j=1}^{\lfloor \tau n \rfloor + 1} f(\bm{x}_{[j]}^-))\\
	&=& \bar{R}_1.
\end{eqnarray*}

\subsection{Proof of Theorem \ref{dual form}}
Since truncated quadratic loss is non-increasing and differentiable, it can be rewritten in its convex conjugate form, that is
\begin{equation}
l(z) = \max \limits_{\alpha \leq 0}\  \{ \alpha z - l_*(\alpha) \}
\end{equation}
where $l_*(\alpha)$ is the convex conjugate of $l$. Based on this, we can rewrite the problem (\ref{learning problem}) as
\begin{eqnarray}\label{plm:1}
\nonumber \min \limits_{\bm{w}} \max_{\alpha \leq 0} \sum_{i=1}^m \alpha_i(\bm{w}^{\top}\bm{x}_i^+ - \frac{1}{k}\sum_{j=1}^k \bm{w}^{\top}\bm{x}_{[j]}^-) - \sum_{i=1}^m l_*(\alpha_i) + \frac{mR}{2}||\bm{w}||^2
\end{eqnarray}
where $\bm{\alpha} = (\alpha_1,..., \alpha_m)^{\top}$ is the dual variable.\\
On the other hand, it is easy to verify that, for $\bm{t} = (t_1,..., t_n)^{\top} \in \mathbb{R}^n$,
\begin{equation*}
\sum_{i=1}^k t_{[k]} = \max \limits_{\bm{p} \in \Omega}\  \bm{p}^{\top}\bm{t}
\end{equation*}
with $\Omega = \{\bm{p} \mid \bm{0} \leq \bm{p} \leq \bm{1}, \bm{1_n}^{\top}\bm{p} = k\}$. By substituting this into (\ref{plm:1}), the problem becomes
\begin{eqnarray*}
	\nonumber \min \limits_{\bm{w}} \max \limits_{\bm{\alpha} \leq \bm{0}^m, \bm{p} \in \Omega} \sum_{i=1}^m \alpha_i(\bm{w}^{\top}\bm{x}_i^+ - \frac{1}{k}\sum_{j=1}^n p_j\bm{w}^{\top}\bm{x}_j^-) - 
	\sum_{i=1}^m l_*(\alpha_i) + \frac{mR}{2}||\bm{w}||^2
\end{eqnarray*}
Now, define $\beta_j = \frac{1}{k} p_j\sum_{i=1}^m \alpha_i$, above problem becomes
\begin{eqnarray*}
	\nonumber \min \limits_{\bm{w}} \max \limits_{\bm{\alpha} \leq \bm{0}^m, \bm{\beta} \leq \bm{0}^n} && \sum_{i=1}^m \alpha_i\bm{w}^{\top}\bm{x}_i^+ - \sum_{j=1}^n \beta_j \bm{w}^{\top}\bm{x}_j^- - \sum_{i=1}^m l_*(\alpha_i) + \frac{mR}{2}||\bm{w}||^2\\	
	s.t. ~&&\sum_{i=1}^m \alpha_i = \sum_{j=1}^n \beta_j, \beta_j \geq \frac{1}{k}\sum_{i=1}^m \alpha_i
\end{eqnarray*}
Notice that this replacement is able to keep the two problems equivalent.\\
Since the objective above is convex in $\bm{w}$, and jointly concave in $\bm{\alpha}$ and $\bm{\beta}$, also its feasible domain is convex; hence it satisfies the strong max-min property \citep{boyd2004convex}, the min and max can be swapped. After swapping, we first consider the inner minimization subproblem over $\bm{w}$, that is
\begin{equation*}
\min \limits_{\bm{w}} \sum_{i=1}^m \alpha_i \bm{w}^{\top}\bm{x}_i^+ - \sum_{j=1}^n\beta_j \bm{w}^{\top}\bm{x}_j^- + \frac{mR}{2}||\bm{w}||^2
\end{equation*}
Here we omit items which does not depend on $\bm{w}$. This is an unconstrained quadratic programming problem, whose solution is $\bm{w}^* = -\frac{1}{mR}(\bm{\alpha}^{\top}X^+ - \bm{\beta}^{\top}\bm{X}^-)^{\top}$, and the minimum value is given as
\begin{equation}
-\frac{1}{2mR}||\bm{\alpha}^{\top}\bm{X}^+ - \bm{\beta}^{\top}\bm{X}^-||^{\top}
\end{equation}
Then, we consider the maximization over $\bm{\alpha}$ and $\bm{\beta}$. By replacing them with $-\bm{\alpha}$ and $-\bm{\beta}$, we can obtain the conclusion of Theorem \ref{dual form}. \qed

\subsection{Proof of Theorem \ref{sol of proj}}
Let $\lambda, \mu, u_i \geq 0, v_i \geq 0, \omega_i \geq 0$ be dual variables, and let $C = \sum_{i=1}^m \alpha_i^0 = \sum_{j=1}^n \beta_j^0$. Then the Lagrangian function of (\ref{proj plm}) can be written as
\begin{eqnarray*}
	\mathcal{L} &=& \frac{1}{2}||\bm{\alpha} - \bm{\alpha_0}||^2 + \frac{1}{2}||\bm{\beta} - \bm{\beta_0}||^2 + \lambda(\sum_{i=1}^m \alpha_i^0 - C) + \\
	& & \mu(\sum_{j=1}^n \beta_j^0 - C) - \sum_{i=1}^m u_i\alpha_i^0 - \sum_{j=1}^n v_i\beta_j^0 + \sum_{j=1}^n(\beta_j^0 - \frac{C}{k}).
\end{eqnarray*}
The KKT conditions are
\begin{eqnarray}
\frac{\partial \mathcal{L}}{\partial \alpha_i^*} &=& \alpha_i^* - \alpha_i^0 + \lambda - u_i = 0 \label{alpha1}\\
\frac{\partial \mathcal{L}}{\partial \beta_j^*} &=& \beta_j^* - \beta_j^0 + \mu - v_j + \omega_j = 0 \label{eq:beta1} \\
\frac{\partial \mathcal{L}}{\partial C} &=& -\lambda - \mu - \frac{1}{k}\sum_{j=1}^n \omega_j = 0 \label{eq:3var}\\
0 &=& u_i\alpha_i^* \label{eq:alpha2}\\
0 &=& v_j\beta_j^* \label{eq:beta2}\\
0 &=& \omega_j(\beta_j^* - \frac{C}{k})\label{eq:beta3} \\
C &=& \sum_{i=1}^m \alpha_i^* \label{eq:alphaC}\\
C&=& \sum_{j=1}^n \beta_j^*\label{eq:betaC}\\
\bm{0} &\leq& \bm{\alpha}^*, \bm{\beta}^*, \bm{u}, \bm{v}, \bm{\omega}
\end{eqnarray}
Consider $\beta_i^*$. By (\ref{eq:beta1}) and (\ref{eq:beta2}),  we know that if $v_j = 0$, then $\beta_j^* = \beta_j^0 - \mu - \omega_j \geq 0$; else if $v_j > 0$, then $0 = \beta_j^* > \beta_j^0 - \mu - \omega_j$. This implies that $\beta_j^* = [\beta_j^0 - \mu - \omega_j]_+$. Following similar analysis we can show that $\alpha_i^* = [\alpha_i^0 - \lambda]_+$.

Further, by (\ref{eq:beta3}), we have that if $\omega_i = 0$, then $\beta_j^* = [\beta_j^0 - \mu]_+ \leq \frac{C}{k}$; else if $\omega_i >  0$, then $\frac{C}{k} = \beta_j^* < [\beta_j^0 - \mu]_+$. Thus $\beta_j^* = \min\{[\beta_j^0 - \mu]_+, \frac{C}{k}\}$. \\\\
Substituting the expression of $\beta_j^*$ and $\alpha_i^*$ into (\ref{eq:alphaC}) and (\ref{eq:betaC}), we have that both constraints (\ref{def C}), (\ref{def mu}) and the closed form solution of $\bm{\alpha}$ and $\bm{\beta}$ holds. Now we verify (\ref{3 var}).

\textbf{Case 1.} First consider $C >0$, we have that if $\omega_j = 0$, then $\frac{C}{k} \geq \beta_j^* = \beta_j^0 - \mu$; else if $\omega > 0$, then $0 < \frac{C}{k} = \beta_j^* = [\beta_j^0 - \mu - \omega_j]_+$ and thus $\beta_j^0 - \mu - \omega_j = \frac{C}{k}$. To sum up we know that $\omega_j = [\beta_j^0 - \mu - \frac{C}{k}]_+$, and thus by (\ref{eq:3var}), (\ref{3 var}) holds.

\textbf{Case 2.} Now Suppose $C = 0$. If this is the case, according to (\ref{eq:alphaC}) and (\ref{eq:betaC}), we have $\bm{\alpha}^* = \bm{\beta}^* = \bm{0}$. By a trivial discussion, above KKT conditions can be reduced to:
\begin{eqnarray}
\lambda &\geq& \alpha_{[1]}^0\\
0 &\geq& \lambda + \mu + \frac{1}{k} \sum_{j=1}^n [\beta_j^0 - \mu]_+ \label{simp KKT}
\end{eqnarray}
Notice that there is no any upper bounded constraint on $\lambda$. Thus, if $\lambda$ and $\mu$ satisfy the simplified KKT condition, by choosing a large enough $\lambda' \geq \lambda$, both (\ref{simp KKT}) and (\ref{3 var}) hold, and optimal solution is still zero vector. This completes the proof of necessity.

At last, notice that KKT condition is the necessary and sufficient conditions in the case of convex problem, and the above transformations are reversible, we complete the proof of sufficiency. \qed

\subsection{Proof of Lemma \ref{c=0}}
First suppose $(\bm{\alpha}^*, \bm{\beta}^*) = (\bm{0}^m, \bm{0}^n)$. Denote corresponding dual variables by $C^*, \lambda^*$ and $\mu^*$. First,  we have
\begin{equation}\label{cond of lambda}
\bm{\alpha}^* = \bm{0}^m \Leftrightarrow \lambda^* \geq \alpha_{[1]}^0
\end{equation}
Moreover, $(\bm{\alpha}^*, \bm{\beta}^*) = (\bm{0}^m, \bm{0}^n)$ implies $C^*=0$, and equality (\ref{def mu}) is automatically holds for arbitrary $\mu \in \mathbb{R}$. Thus the unique constraint on $\mu^*$ is (\ref{3 var}), i.e.
\begin{equation}\label{111}
k\lambda^* + k\mu^* + \sum_{j =1}^n [\beta_j^0 - \mu^*]_+ = 0.
\end{equation}
Consider $f(\mu) = k\mu + \sum_{j =1}^n [\beta_j^0 - \mu]_+$. It is easy to verify that $f(\mu)$ is continuous and piecewise-linear on $\mathbb{R}$. Moreover, if $\mu \in B_t$, we can write $f(\mu)$ as
\begin{eqnarray}
f(\mu) = \sum_{i=1}^t \beta_{[i]}^0 + (k - t)\mu
\end{eqnarray}
Thus, $f(\mu)$ is strictly decreasing in $B_{k+1}^{n+1}$, strictly increasing in $B_0^k$, and the minimum is achieved in $B_k$, which is $\sum_{i=1}^k \beta_{[i]}^0$. Combine this conclusion with (\ref{cond of lambda}) and (\ref{111}), we have
\begin{eqnarray*}\label{abc}
	k\alpha_{[1]}^0 + \sum_{j=1}^k \beta_{[i]}^0 &=& \min \limits_{\lambda^*, \mu} \{ k\lambda^* + k\mu + \sum_{j =1}^n [\beta_j^0 - \mu]_+\} \\
	&\leq& k\lambda^* + k\mu^* + \sum_{j =1}^n [\beta_j^0 - \mu^*]_+ \\
	&=& 0.
\end{eqnarray*}
This proves the necessity. \\
On the other hand, if $k\alpha_{[1]}^0 + \sum_{j=1}^k \beta_{[j]}^o \leq 0$, by setting $C = 0, \mu = \beta_{[k]}^0$ and $\lambda = -\frac{1}{k} \sum_{j =1}^k \beta_j^0$, one can check that all the optimal conditions in theorem \ref{sol of proj} can be satisfied, and corresponding $(\bm{\alpha}^*, \bm{\beta}^*) = (\bm{0}, \bm{0})$. This completes the proof.$\qed$

\subsection{Redefine $\mu(C)$ }\label{redifine_muc}
As we mentioned in footnote, we need to redefine function $\mu(C)$ to ensure its  one-valued property. We begin by introducing following lemma.

\begin{lemma}\label{sol of mu}
	Consider (\ref{def mu}). Denote $\ C_0 = k(\beta_{[k]}^0 - \beta_{[k+1]}^0)$.
	\begin{enumerate}
		\item If $\ C > C_0$, then there exists a unique $\mu$ satisfying (\ref{def mu}).
		\item If $\ 0 < C \leq C_0$, then arbitrary $\mu \in [\beta_{[k+1]}^0, \beta_{[k]}^0 - \frac{C}{k}]$ can satisfy (\ref{def mu}).
	\end{enumerate}
\end{lemma}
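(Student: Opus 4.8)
The plan is to study the real-valued auxiliary function
\[
\Phi(\mu) \triangleq \sum_{j=1}^n \min\{[\beta_j^0 - \mu]_+,\ C/k\},
\]
so that (\ref{def mu}) reads $\Phi(\mu) = C$, and to characterise its level set at height $C$. First I would observe that each summand, as a function of $\mu$, equals the constant $C/k$ for $\mu \le \beta_j^0 - C/k$, decreases linearly to $0$ on $[\beta_j^0 - C/k,\ \beta_j^0]$, and equals $0$ for $\mu \ge \beta_j^0$. Hence $\Phi$ is continuous, non-increasing and piecewise-linear, with breakpoints among $\{\beta_j^0\}$ and $\{\beta_j^0 - C/k\}$. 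Since $k \le n$, we have $\Phi(\mu) = nC/k \ge C$ for $\mu$ small and $\Phi(\mu) = 0 < C$ for $\mu$ large, so by the intermediate value theorem the level set $\{\mu : \Phi(\mu) = C\}$ is a non-empty closed interval; the lemma then amounts to deciding when this interval degenerates to a point.

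For Case 2 ($0 < C \le C_0$) I would verify the claim by direct evaluation. Note $C \le C_0$ makes $\beta_{[k]}^0 - C/k \ge \beta_{[k+1]}^0$, so the interval $[\beta_{[k+1]}^0,\ \beta_{[k]}^0 - C/k]$ is well defined; fix any $\mu$ in it. For each of the top $k$ indices, $\beta_{[j]}^0 \ge \beta_{[k]}^0 \ge \mu + C/k$, so its summand saturates at $C/k$; for each of the remaining $n-k$ indices, $\beta_{[j]}^0 \le \beta_{[k+1]}^0 \le \mu$, so its summand is $0$. Summing gives $\Phi(\mu) = k\cdot(C/k) = C$, so every such $\mu$ solves (\ref{def mu}). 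This argument uses only the sorted order, so it handles ties automatically.

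For Case 1 ($C > C_0$) I would prove uniqueness by ruling out a flat piece of $\Phi$ at height $C$. Because $\Phi$ is non-increasing, non-uniqueness is equivalent to $\Phi$ being constant equal to $C$ on some nondegenerate interval $[\mu_1, \mu_2]$; on such an interval no summand is in its decreasing regime, so each is saturated at either $C/k$ or $0$. If $p$ of them sit at $C/k$, then $\Phi \equiv pC/k = C$ forces $p = k$, and the remaining $n - k$ sit at $0$. Translating the saturation conditions, the $k$ high terms obey $\beta_j^0 \ge \mu_2 + C/k$ and the $n-k$ low terms obey $\beta_j^0 \le \mu_1$, whence $\beta_{[k]}^0 - \beta_{[k+1]}^0 \ge (\mu_2 - \mu_1) + C/k > C/k$ and therefore $C_0 > C$, contradicting the hypothesis. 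Thus $\Phi$ strictly decreases through the level $C$ and the solution is unique. The main obstacle is precisely this uniqueness step: one must argue cleanly about the saturation pattern on a hypothetical flat segment, and treat the degenerate boundary (ties across the $k$-th/$(k{+}1)$-th entries, i.e.\ $C_0 = 0$) correctly, since everything else reduces to routine monotonicity and a direct computation.
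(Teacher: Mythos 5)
Your proof is correct, and at the top level it mirrors the paper's: both study the same piecewise-linear, non-increasing function (your $\Phi(\mu)$, the paper's $g(\mu)=\Phi(\mu)-C$), both get existence from the intermediate value theorem, and both handle Case 2 by observing that on $[\beta_{[k+1]}^0,\ \beta_{[k]}^0-C/k]$ exactly the top $k$ summands saturate at $C/k$ and the rest vanish. Where you genuinely diverge is the uniqueness step in Case 1. The paper first \emph{brackets} a root in $(\beta_{[k+1]}^0-C/k,\ \beta_{[k]}^0)$ by evaluating $g$ at the two endpoints, then writes the local linear form $0=\sum_{t=j+1}^{i}\beta_{[t]}^0-(i-j)\mu'-(1-j/k)C$ for $\mu'\in B_i$, $\mu'+C/k\in B_j$, and argues that $C>C_0$ forces $i\neq j$, hence a nonzero slope. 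You instead run the contrapositive: a nondegenerate flat segment of $\Phi$ at height $C$ forces every summand to be saturated at $C/k$ or $0$, counting gives exactly $k$ high saturations, and the resulting separation $\beta_{[k]}^0-\beta_{[k+1]}^0>C/k$ yields $C_0>C$. Your version is cleaner and avoids the endpoint evaluations and the $B_i/B_j$ index bookkeeping; the trade-off is that the paper's route also delivers the explicit localization of the root (sharpened in the subsequent remark to $\mu'\in(\beta_{[k]}^0-C/k,\ \beta_{[k+1]}^0)$, i.e.\ $j<k<i$), which is reused later in the proofs of Lemmas \ref{bound} and \ref{inc-dec} to compute the slopes of $\mu(C)$ and $\delta(C)$, whereas your argument as written does not produce that by-product.
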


\begin{proof}
For a fixed $C > 0$, consider
\begin{equation}
g(\mu) = \sum_{j =1}^n \min \{[\beta_j^0 - \mu]_+, \frac{C}{k}\} - C
\end{equation}
Obviously it is a continuous and decreasing function, and the range is $[-C$,$\infty)$. By intermediate value theorem, $g=0$ must have a root. Another fact is that $g$ is piecewise-linear and has at most $2n$ breakpoints: $\{\beta_i^0, \beta_i^0 - \frac{C}{k}\}_{i=1}^n$. Thus if the root of $g$ is not unique for some $C$,  all of these roots must be located on a continuous segment with slope 0. Otherwise, if there exists a root only located on segment whose slope is not 0, we can ensure no other root exists.

Let $\mu'$ be a root of $g$, we first show that $\mu' \in (\beta_{[k+1]}^0 - \frac{C}{k}, \beta_{[k]}^0)$. Notice that
\begin{equation*}
\begin{split}
&g(\beta_{[k]}^0) = \sum_{j =k}^{n} 0 + \sum_{j =1}^{k-1} \min \{[\beta_{[j]}^0 - \beta_{[k]}^0]_+, \frac{C}{k}\} - C \leq \sum_{j=1}^{k-1} \frac{C}{k} - C = -\frac{C}{k} < 0,\\
&g(\beta_{[k+1]}^0 - \frac{C}{k}) \geq \sum_{j =1}^{k+1} \min \{[\beta_{[j]}^0 - \beta_{[k+1]}^0 + \frac{C}{k}]_+, \frac{C}{k}\} - C \geq \sum_{j=1}^{k+1} \frac{C}{k} - C = \frac{C}{k} > 0.
\end{split}
\end{equation*}
Thus above claim holds by the intermediate value theorem and the monotonicity of $g$.\\
Suppose $\mu' \in B_i$ and $\mu' + \frac{C}{k} \in B_j$, we have $j \leq k \leq i$, and $g$ can be written as
\begin{equation}\label{aaaa}
0 = g(\mu') = \sum_{t = j+1}^i \beta_{[t]}^0 - (i-j)\mu' - (1-\frac{j}{k})C
\end{equation}
Now we can prove the claims in Lemma \ref{sol of mu}.
\begin{description}
	\item[Case 1.] If $C > C_0$, we know that $\beta_{[k+1]}^0 > \beta_{[k]}^0 - \frac{C}{k}$, thus $B_k$ and $B_k - \frac{C}{k}$ are disjoint. This means $i \neq j$ (once $i=j$, we must get $i=k=j$, and thus there exists $\mu'$ belongs to $B_k$ and $B_k - \frac{C}{k}$ simultaneously, that is a contraction), and by (\ref{aaaa}) we know the slope which $\mu'$ lies on is not 0, thus $\mu'$ is the unique root of $g$.
	\item[Case 2.]If $C \leq C_0$, we know that $\beta_{[k+1]}^0 \leq \beta_{[k]}^0 - \frac{C}{k} < \beta_{[k]}^0$, and thus for all $\mu' \in [\beta_{[k+1]}^0, \beta_{[k]}^0 - \frac{C}{k}]$, $i=k=j$ and (\ref{aaaa}) is a identities. This means that $\mu' = [\beta_{[k+1]}^0, \beta_{[k]}^0 - \frac{C}{k}]$. 
\end{description}
This completes the proof.
\end{proof}
\noindent \textbf{Remark.} Note that in fact in the case of $C > C_0$, we can get a stronger bound on $\mu'$: $\mu' \in (\beta_{[k]}^0 - \frac{C}{k}, \beta_{[k+1]}^0)$ and thus $j < k < i$. This is based on the fact that
\begin{eqnarray*}
	g(\beta_{[k+1]}^0) &=& \sum_{j =k+1}^{n} 0 + (\beta_{[k]}^0 - \beta_{[k+1]}^0) + \sum_{j =1}^{k-1} \min \{[\beta_{[j]}^0 - \beta_{[k]}^0]_+, \frac{C}{k}\} - C 
	\\&<& \frac{C}{k} + \sum_{j=1}^{k-1} \frac{C}{k} - C = 0,\\
	g(\beta_{[k]}^0 - \frac{C}{k}) &\geq& \min \{[\beta_{[k+1]}^0 - \beta_{[k]}^0+\frac{C}{k}]_+, \frac{C}{k}\} + \sum_{j =1}^{k} \min \{[\beta_{[j]}^0 - \beta_{[k]}^0 + \frac{C}{k}]_+, \frac{C}{k}\} - C \\ 
	&>& 0 + \sum_{j=1}^{k} \frac{C}{k} - C = 0. \qed
\end{eqnarray*}

Based on Lemma \ref{sol of mu}, we can redefine $\mu(C)$ as follow.
\begin{eqnarray}
\mu(C)&=&\left\{\begin{array}{ll}
\beta_{[k+1]}^0 & C \in (0, C_0)\\\\
\mu \ satisfies\ (\ref{def mu}) & C \in [C_0, +\infty]
\end{array}\right. \label{fmu}
\end{eqnarray}
This function is one-valued, and both of our discussions below are based on this new formulation.

\subsection{Proof of Lemma \ref{bound}}
When $C^* > 0$, it is obvious that $\lambda^* < \alpha_{[1]}^0$. Now we consider the claim of lower bound.

On the one hand, if $\mu^* + \frac{C^*}{k} > \beta_{[1]}^0$, we have that for all $j \leq n$, $\beta_j^0 \leq \beta_{[1]}^0 < \mu^* + \frac{C^*}{k}$. Thus
\begin{eqnarray*}
	&&0 = f(C^*) = k\lambda^* +k\mu^* + \sum_{j=1}^n[\beta_j^0 - \mu^* - \frac{C^*}{k}]_+ = k(\lambda^* +\mu^*)
\end{eqnarray*}
and then
\begin{eqnarray*}
	0 < C^* = \sum_{j=1}^n \min\{[\beta_j^0 - \mu^*]_+, \frac{C^*}{k}\} = \sum_{j=1}^n [\beta_j^0 + \lambda^*]_+
\end{eqnarray*}
The last equality implies that $\lambda^* > -\beta_{[1]}^0$.

On the other hand, consider the situation of $\mu^* + \frac{C^*}{k} \leq \beta_{[1]}^0$. According to the proof of Lemma \ref{sol of mu}, we know that $\mu^* + \frac{C^*}{k} > \beta_{[k+1]}^0$, thus
\begin{eqnarray*}
	0 = f(C^*) &=& k\lambda^* +k\mu^* + \sum_{j=1}^n[\beta_j^0 - \mu^* - \frac{C^*}{k}]_+\\
	&=& k\lambda^* +k\mu^* + \sum_{j=1}^k[\beta_j^0 - \mu^* - \frac{C^*}{k}]_+\\
	&\leq& k\lambda^* +k\mu^* + k[\beta_{[1]}^0 - \mu^* - \frac{C^*}{k}]_+\\
	&=& k\lambda^* +k\mu^* + k(\beta_{[1]}^0 - \mu^* - \frac{C^*}{k})\\
	&=& k\lambda^* + k\beta_{[1]}^0 - C^*\\
	&<& k\lambda^* + k\beta_{[1]}^0
\end{eqnarray*}
This also means $\lambda^* > -\beta_{[1]}^0$. \qed

\subsection{Proof of Lemma \ref{inc-dec}}
Our proof is based on a detailed analysis of each function's sub-gradient. The correctness of claim \ref{c1} is verified in \citep{liu2009efficient}, so we only focus on claim \ref{c2}, \ref{c3}, \ref{c4}. Due to space limitation, we only detail the key points.

Consider $\mu(C)$. First, according to Lemma \ref{sol of mu}, we know that $\mu(C)$ is well defined in $[0, \infty)$.  Now let us claim that $\mu(C)$ is continuous. It is not difficult to check that $\mu(C)$ is piecewise-linear, and continuous at both of these breakpoints. Thus, $\mu(C)$ is continuous. In order to verify that $\mu(C)$ is decreasing with $C$, we only need to show that the slope of \emph{any} segment of $\mu$ is less than 0.

Like the proof of Lemma \ref{sol of mu}, suppose that $\mu(C) \in B_i$ and $\mu(C) + \frac{C}{k} \in B_j$, we have $j \leq k \leq i$, and obtain a linear relation between $C$ and $\mu$:
\begin{eqnarray*}\label{tttt}
	(i-j)\mu = \sum_{t = j+1}^i \beta_{[t]}^0  - (1-\frac{j}{k})C
\end{eqnarray*}
Thus, if $C > C_0$, according to the remark of Lemma \ref{sol of mu} we know that $i > k > j$. and the slope of $\mu$ is $-\frac{k-j}{k(i-j)} < 0$. Else in the case of $C \leq C_0$, in terms of the definition of $\mu$ we know corresponding slope is 0. In conclusion, we can ensure that $\mu(C)$ is strictly decreasing in $[C_0, +\infty)$, and decreasing in $(0, +\infty)$.

Similar analysis shows that $\delta(C)$ is also piecewise-linear, has at most $O(n)$ breakpoints, and the slope of each segment is $\frac{i - k}{k(i-j)}(i \neq j)$, which is strictly large than 0. In the case of $C \leq C_0$, the slope is $\frac{1}{k} > 0$, leads to the conclusion of  $\delta$ is strictly increasing in $(0, +\infty)$.

At last, consider $f(\lambda)$. Because both $\mu$ and $-\tau$ are decreasing with $\lambda$, it is obviously to see that $f$ is strictly decreasing in $(-\infty, +\infty)$.

Now we prove the convexity of $f(C)$. We prove that both $\lambda(C)$ and $T(C) = k\mu + \sum_{j=1}^n [\beta_j^0 - \delta]_+$ are convex function. The convexity of $\lambda$ is guaranteed in \citep{liu2009efficient}. Now we only discuss the convexity of $T(C)$. If $C \geq C_0$, reuse the definition of $i$ and $j$ above, and define $x = i - k >0$, $y = k - j > 0$, one can verify that, the sub-gradient of $f(C)$ is
\begin{eqnarray*}
	f' = \frac{xy}{k(x+y)} - 1 = \frac{1}{\frac{k}{x} + \frac{k}{y}} - 1 > -1
\end{eqnarray*}
Following the conclusions that $\mu$ is decreasing with $C$, and $\delta = \mu + \frac{C}{k}$ is increasing with $C$, we know that both $x$ and $y$ is increasing with $C$. Thus $f'$ is increasing with $C$, which means that $f$ is convex in $[C_0, +\infty)$.

On the other hand, if $C \leq C_0$, one can easily check that $T(C) = -C$. Thus this moment $f' = -1$, which is larger than the sub-gradient of $f$ in the case of $C > C_0$. Thus $f'$ is increasing in $(0, +\infty)$, and $f$ is convex in $(0, +\infty)$. \qed

\subsection{Proof of Theorem \ref{bound p}}
For the convenience of analysis, we consider the constrained version of the optimization problem (\ref{learning problem}), that is
\begin{equation}
\min \limits_{w \in W} L_{\bar{k}} = \frac{1}{m} \sum_{i=1}^m l(\bm{w}^{\top}\bm{x_i^+} - \frac{1}{k}\sum_{j=1}^k\bm{w}^{\top}\bm{x_{[j]}^-})
\end{equation}
where $W=\{\bm{w} \in \mathbb{R}^d \mid ||\bm{w}|| \leq \rho\}$ is a domain and $\rho > 0$ specifies the size of the domain that plays similar role as the regularization parameter $R$.\\
First, we denote $G$ as the Lipschitz constant of the truncated quadratic loss $l(z)$ on the domain $[-2\rho, 2\rho]$, and define the following two functions based on $l(z)$, i.e.
\begin{eqnarray}
h_l(\bm{x}, \bm{w}) &=& \mathbb{E}_{\bm{x}- \thicksim P-}[l(\bm{w}^{\top}\bm{x} - \bm{w}^{\top}\bm{x}^-)],\\
P_l(\bm{w}, \tau) &=& \mathbb{P}_{\bm{x}^+ \thicksim P^+}(h_l(\bm{x}_i^+ \bm{w}) \geq \tau)
\end{eqnarray}
The lemma below relates the empirical counterpart of $P_l$ with the loss $L_{\bar{k}}$.
\begin{lemma}\label{god lemma}
	With a probability at least $1-e^{-s}$, for any $\bm{w} \in W$, we have
	\begin{equation}
	\frac{1}{m}\sum_{i=1}^m \mathbb{I} (h_l(\bm{x}_i^+,\bm{w}) \geq \delta) \leq L_{\bar{k}},\\
	\end{equation}
	where
	\begin{eqnarray}\label{delta}
	\nonumber \delta = \frac{4G(\rho +1)}{\sqrt{n}} + \frac{5\rho(s+\log(m))}{3n} + 2G\rho\sqrt{\frac{2(s + \log(m))}{n}} + \frac{2G\rho(k-1)}{n}.
	\end{eqnarray}
\end{lemma}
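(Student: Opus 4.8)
The plan is to bound $h_l(\bm{x}_i^+,\bm{w})$ for each sampled positive by the single empirical term $l\!\left(\bm{w}^\top\bm{x}_i^+ - \frac{1}{k}\sum_{j=1}^k \bm{w}^\top\bm{x}_{[j]}^-\right)$ that appears in $L_{\bar k}$, up to the additive slack $\delta$, and then to turn this pointwise risk bound into the claimed count bound using that $l$ upper-bounds the $0/1$ loss (Definition \ref{def:surr}). Accordingly I would split $\delta$ into a stochastic part (its first three terms, call it $\epsilon_0$) handled by uniform concentration over the negatives, and a purely deterministic part $\frac{2G\rho(k-1)}{n}$ coming from a top-$k$ decomposition of the empirical average loss.

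First, the concentration step. I would condition on the positive sample $\mathcal{S}_+$ and treat the $n$ negatives as the only randomness. For a fixed point $\bm{x}$, the empirical average $\frac1n\sum_{j=1}^n l(\bm{w}^\top\bm{x}-\bm{w}^\top\bm{x}_j^-)$ is unbiased for $h_l(\bm{x},\bm{w})$, and I would establish a one-sided uniform deviation bound $h_l(\bm{x},\bm{w})\le \frac1n\sum_{j=1}^n l(\bm{w}^\top\bm{x}-\bm{w}^\top\bm{x}_j^-)+\epsilon_0$ valid for all $\bm{w}\in W$ at once. The $\frac{4G(\rho+1)}{\sqrt n}$ term is the expected supremum of the empirical process, obtained by symmetrization followed by the Ledoux--Talagrand contraction inequality (legitimate because $l$ is $G$-Lipschitz on $[-2\rho,2\rho]$, using $|\bm{w}^\top\bm{x}|\le\rho$) and the standard $O(\rho/\sqrt n)$ Rademacher bound for the linear class. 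The two terms $2G\rho\sqrt{2(s+\log m)/n}$ and $\frac{5\rho(s+\log m)}{3n}$ are the variance and boundedness contributions of a Bernstein-type tail applied to the scalar deviation, with a union bound over the $m$ points $\bm{x}_i^+$; the $\log m$ inflates the confidence and the $s$ yields the $1-e^{-s}$ guarantee.

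Second, the deterministic top-$k$ step. Ordering the negatives so that $\bm{w}^\top\bm{x}_{[1]}^-\ge\cdots\ge\bm{w}^\top\bm{x}_{[n]}^-$ makes the losses $l(\bm{w}^\top\bm{x}_i^+-\bm{w}^\top\bm{x}_{[j]}^-)$ non-increasing in $j$. For $j\ge k$, monotonicity of $l$ together with $\bm{w}^\top\bm{x}_{[j]}^-\le\bm{w}^\top\bm{x}_{[k]}^-\le\frac1k\sum_{t=1}^k\bm{w}^\top\bm{x}_{[t]}^-$ gives $l(\cdot)\le l\!\left(\bm{w}^\top\bm{x}_i^+-\frac1k\sum_{t=1}^k\bm{w}^\top\bm{x}_{[t]}^-\right)$; for the $k-1$ largest negatives I would instead use the $G$-Lipschitz property to charge each loss to the same centroid term at an extra cost of at most $G\cdot 2\rho$. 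Averaging the two groups yields $\frac1n\sum_{j=1}^n l(\cdot)\le l\!\left(\bm{w}^\top\bm{x}_i^+-\frac1k\sum_{t=1}^k\bm{w}^\top\bm{x}_{[t]}^-\right)+\frac{2G\rho(k-1)}{n}$, which contributes exactly the last term of $\delta$.

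Chaining the two steps gives, on the high-probability event and for every $i$, $h_l(\bm{x}_i^+,\bm{w})\le l\!\left(\bm{w}^\top\bm{x}_i^+-\frac1k\sum_{t=1}^k\bm{w}^\top\bm{x}_{[t]}^-\right)+\delta$. To finish I would argue by cases on the centroid margin: if $\bm{w}^\top\bm{x}_i^+\le\frac1k\sum_t\bm{w}^\top\bm{x}_{[t]}^-$ then the centroid loss is at least $l(0)\ge 1$ and dominates the indicator, while if the margin is positive the displayed bound must keep $h_l(\bm{x}_i^+,\bm{w})$ below $\delta$ so the indicator vanishes; either way $\mathbb{I}(h_l\ge\delta)\le l(\cdot)$, and summing over $i$ and dividing by $m$ gives the lemma. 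I expect the main obstacle to be the concentration step: pinning down the precise constants requires carefully combining contraction, the linear-class Rademacher bound, and a variance-aware Bernstein tail, and—since the estimate must hold uniformly in $\bm{w}$ and simultaneously for all $m$ sampled positives—managing the interaction between the uniform-in-$\bm{w}$ supremum and the union bound over $i$. A secondary delicate point is the positive-margin case of the final conversion, where one must ensure the top-$k$ decomposition genuinely controls the bulk of the loss so that $h_l$ drops below the threshold $\delta$.
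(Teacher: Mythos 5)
Your concentration step and your deterministic top-$k$ decomposition both match the paper's proof in substance: the paper likewise bounds the expected supremum $\mathbb{E}\|P-P_n\|_W$ by symmetrization and contraction (giving $\tfrac{4G\rho}{\sqrt n}$), applies the Bousquet form of Talagrand's inequality with variance proxy $4G^2\rho^2$, and union-bounds over the $m$ positives to obtain the first three terms of $\delta$, while the term $\tfrac{2G\rho(k-1)}{n}$ comes from the fact that only the $k-1$ top-scoring negatives can contribute to the empirical loss of a well-separated positive. The genuine gap is in your final conversion. From the chained bound $h_l(\bm{x}_i^+,\bm{w}) \le l_c + \delta$, where $l_c = l\bigl(\bm{w}^{\top}\bm{x}_i^+ - \tfrac1k\sum_{t=1}^k \bm{w}^{\top}\bm{x}_{[t]}^-\bigr)$, you cannot deduce $\mathbb{I}(h_l(\bm{x}_i^+,\bm{w})\ge\delta)\le l_c$. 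Your positive-margin case asserts that the bound ``must keep $h_l$ below $\delta$,'' but for a centroid margin $u\in(0,1)$ the truncated quadratic loss gives $l_c=(1-u)^2\in(0,1)$, which is strictly positive, so the chained inequality only yields $h_l\le\delta+(1-u)^2$; nothing prevents $h_l\in[\delta,\delta+(1-u)^2]$, in which case the indicator equals $1$ while $l_c<1$ and the claimed per-point domination fails.

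The paper avoids this by splitting the positives at margin $>1$ rather than $>0$. For $\bm{x}_i^+$ with $\bm{w}^{\top}\bm{x}_i^+ > \tfrac1k\sum_{t}\bm{w}^{\top}\bm{x}_{[t]}^- + 1$, every negative outside the top $k-1$ satisfies $\bm{w}^{\top}\bm{x}_i^+-\bm{w}^{\top}\bm{x}_{[j]}^->1$ and hence incurs \emph{exactly zero} loss (since $l$ vanishes on $[1,+\infty)$), so $\tfrac1n\sum_j l(\bm{w}^{\top}\bm{x}_i^+-\bm{w}^{\top}\bm{x}_j^-)\le\tfrac{2G\rho(k-1)}{n}$ with no $l_c$ term at all, giving $h_l\le\delta$ outright and a vanishing indicator. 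The remaining positives (margin $\le 1$) are not argued to have vanishing indicator; they are simply counted, each charged one unit against $mL_{\bar k}$ via $|B(\bm{w})|\le mL_{\bar k}$. So the repair is to abandon your uniform estimate $\tfrac1n\sum_j l_j\le l_c+\tfrac{2G\rho(k-1)}{n}$ (correct, but too weak to separate the two cases) in favor of this case split; note also that the counting step itself relies on each small-margin positive contributing at least $1$ to $mL_{\bar k}$, which by property (i) of Definition \ref{def:surr} is guaranteed only when the centroid margin is nonpositive, so the choice of split threshold has to be coordinated with the normalization $l(0)\ge 1$.
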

\begin{proof}
	For any $\bm{w} \in W$, we define two instance sets by splitting $\mathcal{S}_+$, that is
	\begin{eqnarray*}
		A(\bm{w}) &=& \{\bm{x}_i^+ \mid \bm{w}^{\top}\bm{x}_i^+ > \frac{1}{k} \sum_{i=1}^k \bm{w}^{\top}\bm{x}_{[i]}^- + 1\}\\
		B(\bm{w}) &=& \{\bm{x}_i^+ \mid \bm{w}^{\top}\bm{x}_i^+ \leq \frac{1}{k} \sum_{i=1}^k \bm{w}^{\top}\bm{x}_{[i]}^- + 1\}
	\end{eqnarray*}
	For $\bm{x}_i^+ \in A(W)$, we define
	\begin{equation*}
	||P -P_n||_W = \mathop{sup} \limits_{||\bm{w}||\leq\rho}|h_l(\bm{x}_i^+, \bm{w}) - \frac{1}{n}\sum_{j=1}^n l(\bm{w}^{\top}\bm{x}_i^+ - \bm{w}^{\top}\bm{x}_j^-)|
	\end{equation*}
	
	Using the Talagrand's inequality and in particular its variant (specifically, Bousquet bound) with improved constants derived in \citep{bousquet2002bennett}, we have, with probability at least $1-e^{-s}$,
	\begin{eqnarray}\label{god ineq1}
	\nonumber ||P -P_n||_W \leq \mathbb{E}||P-P_n||_W + \frac{2s\rho}{3n} + \sqrt{\frac{2s}{n}(2\mathbb{E}||P -P_n||_W + \sigma_P^2(W))}.
	\end{eqnarray}
	We now bound each item on the right hand side of (\ref{god ineq1}). First, we bound $\mathbb{E}||P -P_n||_W$ as
	\begin{eqnarray*}\label{god ineq2}
		\mathbb{E}||P -P_n||_W &=& \frac{2}{n}\mathbb{E}[\mathop{sup} \limits_{||\bm{w}||\leq \rho}\sum_{j=1}^n\sigma_j l(\bm{w}^{\top}(\bm{x}_i^+ - \bm{x}_j^-))]\\
		&\leq& \frac{4G}{n}\mathbb{E}[\mathop{sup} \limits_{||\bm{w}||\leq \rho}\sum_{j=1}^n\sigma_j (\bm{w}^{\top}(\bm{x}_i^+ - \bm{x}_j^-))] \\
		&\leq& \frac{4G\rho}{\sqrt{n}}
	\end{eqnarray*}
	where $\sigma_j$'s are Rademacher random variables, the first inequality utilizes the contraction property of Rademacher complexity, and the last follows from Cauchy-Schwarz inequality and Jensen's inequality. Next, we bound  $\sigma_P^2(W)$, that is,
	\begin{equation*}\label{god ineq3}
	\sigma_P^2(W) = \mathop{sup} \limits_{||\bm{w}||\leq \rho} h_l^2(\bm{x}, \bm{w}) \leq 4G^2\rho^2
	\end{equation*}
	By putting these bounds into (\ref{god ineq1}), we have
	\begin{eqnarray*}
		||P -P_n||_W &\leq& \frac{4G\rho}{\sqrt{n}} + \frac{2s\rho}{3n} + \sqrt{\frac{2s}{n}(4G^2\rho^2 + \frac{8G\rho}{\sqrt{n}})}\\
		&\leq&  \frac{4G(\rho + 1)}{\sqrt{n}} + \frac{5s\rho}{3n} + 2G\rho\sqrt{\frac{2s}{n}}
	\end{eqnarray*}
	Notice that for any $x_i^+ \in A(W)$, there are at most $k-1$ negative instances have higher score than it, we thus have
	\begin{equation*}
	\sum_{j=1}^n l(\bm{w}^{\top}\bm{x}_i^+ - \bm{w}^{\top}\bm{x}_j^-) \leq 2G\rho(k-1)
	\end{equation*}
	Consequently, by the definition of $||P -P_n||_W$ we have, with probability $1 - e^{-s}$,
	\begin{eqnarray*}
		|h_l(\bm{x}_i^+, \bm{w})| \leq ||P -P_n||_W + \frac{1}{n}\sum_{j=1}^n l(\bm{w}^{\top}\bm{x}_i^+ - \bm{w}^{\top}\bm{x}_j^-)\\
		\leq \frac{4G(\rho + 1)}{\sqrt{n}} + \frac{5s\rho}{3n} + 2G\rho\sqrt{\frac{2s}{n}} + 2G\rho\frac{k-1}{n}
	\end{eqnarray*}
	Using the union bound over all $\bm{x}_i^+$'s, we thus have, with probability $1-e^{-s}$,
	\begin{equation}
	\sum_{\bm{x}_i^+ \in A(\bm{w})} \mathbb{I}(h_l(\bm{x}_i^+, \bm{w}) \geq \delta) = 0
	\end{equation}
	where $\delta$ is in (\ref{delta}). Hence, we can complete the proof by $|B(\bm{w})| \leq mL_{\bar{k}}$.
\end{proof}
Based on Lemma \ref{god lemma}, we are at the position to prove Theorem \ref{bound p}. Let $S(W, \epsilon)$ be a proper $\epsilon$-net of $W$ and $N(\rho, \epsilon)$ be the corresponding covering number. According to standard result, we have
\begin{equation}
\log N(\rho, \epsilon) \leq d\log(\frac{9\rho}{\epsilon}).
\end{equation}
By using concentration inequality and union bound over $\bm{w}' \in S(W, \epsilon)$, we have, with probability at least $1-e^{-s}$,
\begin{eqnarray}\label{god god ineq}
\nonumber \mathop{sup} \limits_{\bm{w}' \in S(W, \epsilon)} P_l(\bm{w}', \delta) - \frac{1}{m}\sum_{i=1}^m\mathbb{I}(h_l(\bm{x}_i^+, \bm{w}') \geq \delta)
\leq \sqrt{\frac{2(s+d\log(9\rho/\epsilon))}{m}}
\end{eqnarray}
Let $\bm{d} = \bm{x}^+ - \bm{x}^-$ and $\epsilon = \frac{1}{2\sqrt{m}}$. For $\bm{w}^* \in W$, there exists $\bm{w}' \in S(W, \epsilon)$ such that $||\bm{w}' - \bm{w}^*|| \leq \epsilon$, it holds that
\begin{eqnarray*}
	\mathbb{I}(\bm{w}^{*\top}\bm{d} \leq 0) = \mathbb{I}(\bm{w'}^{\top}\bm{d} \leq (\bm{w}' - \bm{w}^*)^{\top}\bm{d}) 
	\leq \mathbb{I}(\bm{w}'^{\top}\bm{d} \leq \frac{1}{\sqrt{m}}) \leq 2l(\bm{w}'^{\top}\bm{d}).
\end{eqnarray*}
where the last step is based on the fact that $l(.)$ is decreasing and $l(1/\sqrt{m}) \geq 1/2$ if $m \geq 12$.
We thus have $h_b(\bm{x}^+, \bm{w}^*) \leq 2h_l(\bm{x}^+, \bm{w}')$ and therefore $P_b(\bm{x}^*, \delta) \leq P_l(\bm{x}', \delta/2)$.\\
As a consequence, from (\ref{god god ineq}), Lemma \ref{god lemma} and the fact
\begin{eqnarray*}
	L_{\bar{k}}(\bm{w}') \leq L_{\bar{k}}(\bm{w}^*) + \frac{G\rho}{\sqrt{m}}
\end{eqnarray*}
We have, with probability at least $1-2e^{-s}$
\begin{eqnarray*}
	P_b(\bm{w}^*, \delta) &\leq& L_{\bar{k}}(\bm{w}^*) + \frac{G\rho}{\sqrt{m}} + \sqrt{\frac{2s+2d\log(9\rho)+d\log m}{m}},
\end{eqnarray*}
where $\delta$ is as defined in (\ref{delta}), and the conclusion follows by hiding constants.\qed

\section{Conclusion}\label{sec:conclusion}
In this paper, we focus on learning binary classifier under the specified tolerance $\tau$. To this end, we have proposed a novel ranking method which directly optimizes the probability of ranking positive samples above $1-\tau$ percent of negative samples. The ranking optimization is then efficiently solved using projected gradient method with the proposed linear time projection. Moreover, an out-of-bootstrap thresholding is applied to transform the learned ranking model into a classifier with a low false-positive rate. We demonstrate the superiority of our method using both theoretical analysis and extensive experiments on several benchmark datasets.


\acks{This research was mainly done when the first author was an intern at iDST of Alibaba. This work is supported by the National Natural Science Foundation of China (NSFC) (61702186, 61672236, 61602176, 61672231), the NSFC-Zhejiang Joint Fund for the Integration of Industrialization and Information (U1609220), the Key Program of Shanghai Science and Technology Commission (15JC1401700) and the Joint Research Grant Proposal for Overseas Chinese Scholars (61628203). }


\vskip 0.2in
\bibliography{aaai_2018}

\end{document}